\PassOptionsToPackage{inline}{enumitem}

\documentclass[10pt]{article} 

\usepackage[accepted]{rlj} 

\usepackage{amssymb}            
\usepackage{mathtools}          
\usepackage{mathrsfs}           
\usepackage{graphicx}           
\usepackage{subcaption}         
\usepackage[space]{grffile}     
\usepackage{url}                
\usepackage{lipsum}             

\usepackage{amsmath}
\usepackage{amsthm}
\usepackage{booktabs} 
\usepackage{algorithm}
\usepackage{algorithmicx}
\usepackage{algpseudocode}
\usepackage[capitalize]{cleveref}
\usepackage[normalem]{ulem}
\usepackage{hyperref}
\usepackage[binary-units]{siunitx}

\crefname{section}{Sec.}{Secs.}
\Crefname{section}{Sec.}{Secs.}
\crefname{algocf}{alg.}{algs.}
\Crefname{algocf}{Algorithm}{Algorithms}

\theoremstyle{plain}
\newtheorem{theorem}{Theorem}[section]
\newtheorem{proposition}[theorem]{Proposition}
\newtheorem{lemma}[theorem]{Lemma}

\theoremstyle{definition}

\newtheorem{assumption}[theorem]{Assumption}

\DeclareMathOperator*{\argmin}{arg\,min}

\newcommand{\rvar}[1]{\mathrm{#1}}        
\newcommand{\rvec}[1]{\mathbf{#1}}        
\renewcommand{\vec}[1]{{\boldsymbol #1}}  
\newcommand{\mat}[1]{{\boldsymbol #1}}    

\newcommand{\PP}[2][]{\mathbb{P}_{#1}\left[#2\right]} 

\newcommand{\A}{\mathcal{A}}
\newcommand{\B}{\mathcal{B}}

\newcommand{\F}{\mathcal{F}}

\renewcommand{\L}{\mathcal{L}}
\newcommand{\M}{\mathcal{M}}

\renewcommand{\O}{\mathcal{O}} 

\renewcommand{\S}{\mathcal{S}}

\newcommand{\X}{\mathcal{X}}
\newcommand{\Y}{\mathcal{Y}}

\title{Risk-Aware General-Utility\\Markov Decision Processes}

\setrunningtitle{Risk-Aware General-Utility Markov Decision Processes}

\author{Pedro P. Santos\textsuperscript{1,2}, Fábio Vital\textsuperscript{1,2}, Alberto Sardinha\textsuperscript{1,3}, Francisco S. Melo\textsuperscript{1,2}}

\emails{\{pedro.pinto.santos, fabiovital\}@tecnico.ulisboa.pt \ sardinha@inf.puc-rio.br, fmelo@inesc-id.pt}

\affiliations{
$^{1}$\textbf{INESC-ID}\\
$^{2}$\textbf{Instituto Superior Técnico, University of Lisbon}\\
$^{3}$\textbf{Pontifical Catholic University of Rio de Janeiro}
}

\contribution{
    We motivate, propose and formalize risk-aware GUMDPs, which enable agents and decision makers to trade off expected performance and risk aversion while benefiting from the rich set of objectives that can be cast under the framework of GUMDPs.
    }
    {
    Previous works studied risk-neutral (expected performance) policy optimization in GUMDPs \citep{zahavy_2021,mutti_2023,santos_2025}.
    }

\contribution{
    We show how we can solve risk-aware GUMDPs by resorting to online planning techniques, proposing an MCTS-based approach to provably solve risk-aware GUMDPs up to any desired accuracy.
    }
    {
    None.
    }

\contribution{
    We provide a set of experimental results showcasing that our approach is successful when optimizing for a spectrum of risk-aware behaviors in the context of GUMDPs under diverse tasks (standard MDPs, maximum state entropy exploration, imitation learning, and multi-objective MDPs).
    }
    {
    None.
    }

\keywords{Risk-aware decision-making, Reinforcement learning, Planning.} 

\summary{
We study general-utility Markov decision processes (GUMDPs) with risk-aware objectives. In this framework, an agent aims to optimize a risk measure of the distribution of objective values, where the objective function depends on the frequency of visitation of states induced by the agent's policy. First, we motivate, propose and formalize risk-aware GUMDPs, which enable agents and decision makers to trade off expected performance and risk aversion while benefiting from the rich set of objectives that can be cast under the framework of GUMDPs. We focus our attention to the entropic risk measure (ERM). Second, we show how we can solve risk-aware GUMDPs with ERM objectives by resorting to online planning techniques. In particular, we propose an MCTS-based approach to provably solve risk-aware GUMDPs up to any desired accuracy. Third, we provide a set of experimental results showcasing that our approach is successful when optimizing for a spectrum of risk-aware behaviors in the context of GUMDPs under diverse tasks (standard MDPs, maximum state entropy exploration, imitation learning, and multi-objective MDPs).
}

\begin{document}

\makeCover  
\maketitle  

\begin{abstract}
    We study general-utility Markov decision processes (GUMDPs) with risk-aware objectives. In this framework, an agent aims to optimize a risk measure of the distribution of objective values, where the objective function depends on the frequency of visitation of states induced by the agent's policy. First, we motivate, propose, and formalize risk-aware GUMDPs, which enable agents and decision makers to trade off expected performance by risk aversion while benefiting from the rich set of objectives that can be cast under the framework of GUMDPs. We focus our attention on the entropic risk measure (ERM). Second, we show how we can solve risk-aware GUMDPs with ERM objectives by resorting to online planning techniques. In particular, we propose an approach based on Monte Carlo Tree Search (MCTS) to provably solve risk-aware GUMDPs up to any desired accuracy. Third, we provide a set of experimental results showcasing that our approach is successful when optimizing for a spectrum of risk-aware behaviors in the context of GUMDPs under diverse tasks (standard MDPs, maximum state entropy exploration, imitation learning, and multi-objective MDPs). Code available at \href{https://github.com/gh0stwin/risk-aware-gumdp}{\texttt{https://github.com/gh0stwin/risk-aware-gumdp}}.
\end{abstract}

\section{Introduction} \label{sec:intro}
Markov decision processes (MDPs) \citep{puterman2014markov} have found a wide range of applications in different domains, ranging from optimal stopping \citep{chow1971great}, inventory management \citep{dvoretzky_1952}, or queueing control \citep{stidham_1978}. MDPs also play a central role in reinforcement learning (RL) \citep{sutton_1998}, where the agent-environment interaction is typically modelled within the MDP framework. In recent years, RL has achieved remarkable success across diverse domains \citep{mnih_2015,silver_2017,lillicrap_2016}.

However, many relevant objectives cannot be easily expressed within the standard MDP framework \citep{abel2022expressivity}. Examples include imitation learning \citep{hussein_2017,Osa_2018}, pure exploration problems \citep{hazan_2019}, risk-averse RL \citep{garcia_2015}, diverse skills discovery \citep{eysenbach2018diversity,achiam2018variational}, and constrained MDPs \citep{altman_1999,efroni_2020}. To address these limitations, the framework of general-utility Markov decision processes (GUMDPs) has been proposed as a more expressive formalism capable of modelling such objectives \citep{santos_2024}. In GUMDPs, the agent's objective is encoded as a function of the occupancy induced by a given policy, i.e., as a function of the frequency of visitation of state-action pairs induced by a given policy. GUMDPs generalize MDPs by allowing the objective function to be a non-linear function of occupancies.

Previous studies have been focused on studying policy optimization in GUMDPs by considering risk-neutral objectives, where the performance of a given policy is evaluated in terms of its induced \textit{expected} frequency of visitation of states \citep{mutti_2023,santos_2025,zahavy_2021,geist2022concave}. However, these approaches overlook the \textit{variability} in the objective: for a fixed policy, the stochastic nature of the environment can lead to a distribution over possible state-action visitation frequencies, resulting in a range of potential objective values. Consequently, optimizing only for the expectation may fail to capture important aspects of the policy's performance. Given the stochasticity of the possible outcomes, it is thus natural to consider optimizing a specific \textit{risk measure} \citep{artzner_1999} over the distribution of objective values, enabling the learned behavior to be tuned toward risk-seeking or risk-averse preferences. The \textit{entropic risk measure} (ERM) \citep{howard_matheson_1972} is one of the most common risk measures, which has received considerable attention across various domains such as finance and sequential decision-making \citep{follmer_2016,borkar_2002,pagnoncelli_2022,hau_2023,marthe2025efficientrisksensitiveplanningentropic,mortensen2025entropicriskoptimizationdiscounted}. As a risk-aware criterion, the ERM enables agents/decision-makers to trade off between expected performance and risk aversion, providing robustness against adverse (worst-case) outcomes. To further motivate the relevance of trading off between risk-neutral and risk-averse behaviors in the context of GUMDPs, we present the following illustrative example.

\begin{figure*}[t]
    \centering
    \begin{minipage}{0.4\textwidth}
        \begin{subfigure}[t]{\textwidth}
            \centering
            \includegraphics[height=1.2cm]{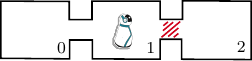}
            \caption{Environment illustration.}
            \label{fig:motivating_example:a}
        \end{subfigure}
        \vspace{4mm} 
        \begin{subfigure}[t]{\textwidth}
            \centering
            \includegraphics[height=2.5cm]{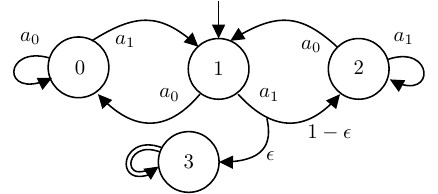}
            \caption{Abstracted environment.}
            \label{fig:motivating_example:b}
        \end{subfigure}
    \end{minipage}%
    \hfill
    \begin{minipage}{0.6\textwidth} 
        \vspace{0.1cm}
        \begin{subfigure}[t]{\textwidth}
            \centering
            \includegraphics[width=0.99\columnwidth]{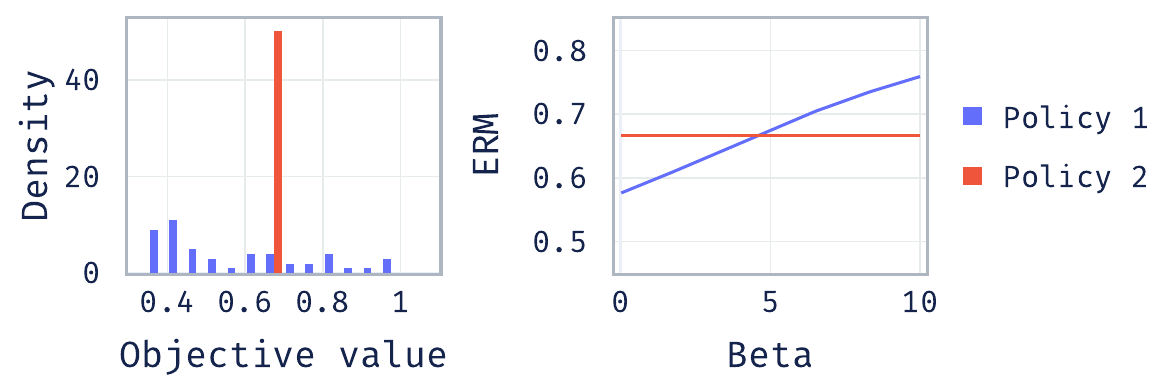}
            \caption{Objective function density (left) and respective $\text{ERM}_\beta$ values (right) for two policies $\pi_1$ (Policy 1) and $\pi_2$ (Policy 2). Lower values are better.}
            \label{fig:motivating_example:c}
        \end{subfigure}
    \end{minipage}
    \caption{Motivating example: Exploring three rooms.}
    \label{fig:motivating_example}
\end{figure*}

\paragraph{Motivating example}
We consider the illustrative environment in~\Cref{fig:motivating_example:a}, where a robot aims to explore three rooms as uniformly as possible (pure exploration task \citep{hazan_2019}). Transitioning from room $1$ to room $0$ incurs no danger to the robot. However, attempting to transition from room $1$ to room $2$ may cause irreparable damage to the robot, rendering it unable to explore the environment further. The dynamics of this environment can be abstracted by the MDP in~\Cref{fig:motivating_example:b}, where each state corresponds to one of the rooms, the actions encode the transitions between rooms, and $\epsilon \in (0,1)$ is the probability of irreparable damage to the robot when attempting to transition between rooms $1$ and $2$. We can resort to a GUMDP to encode the agent's objective by setting the objective function to be (minus) the entropy of the occupancy induced by any policy. Now, a \textit{risk-averse} agent will not be interested in trying to explore state 2 since there exists a probability that it will get absorbed into state 3 and, thus, not being able to explore the environment further. On the other hand, a \textit{risk-seeker} agent will try to visit state 2 since it may be successful, hence exploring all three rooms. In~\Cref{fig:motivating_example:c}, we compare two policies:
\begin{enumerate*}[label=(\roman*)]
    \item $\pi_1$, a \textit{risk-seeking} policy that attempts to visit state 2; and
    \item $\pi_2$, a risk-averse policy that does not attempt to visit state 2 and, instead, keeps alternating between states 0 and 1.
\end{enumerate*}
The left plot shows the density of the objective values for the policies. Both policies yield very different distributions, with the risk-seeking policy $\pi_1$ attaining lower (better) objective values at the cost of sometimes obtaining higher (worse) objective values. On the other hand, the risk-averse policy $\pi_2$ always attains the same intermediate objective value due to its deterministic behavior. The plot on the right shows the $\text{ERM}_\beta$ of both policies as a function of $\beta \in (0, \infty)$. Higher $\beta$ values correspond to more risk-averse preferences, while $\beta$ approaching zero reflects risk neutrality. As shown, $\pi_2$ achieves a lower $\text{ERM}_\beta$ value for higher $\beta$, confirming its risk-averse behavior. Conversely, as $\beta$ decreases, $\pi_1$ becomes the superior policy.

\paragraph{Contributions}
We study risk-aware GUMDPs with ERM objectives. As a risk-aware criterion, the ERM enables agents/decision-makers to trade off between expected performance and risk aversion, providing robustness against adverse (worst-case) outcomes. Our \textit{contributions} are threefold. \textit{First}, we motivate, propose, and formalize risk-aware GUMDPs, which enable agents to trade off between expected performance and risk aversion while benefiting from the rich set of objectives that can be cast under the framework of GUMDPs. We focus our attention on the ERM, as it has been commonly used by previous works in the context of MDPs \citep{borkar_2002,pagnoncelli_2022,hau_2023,marthe2025efficientrisksensitiveplanningentropic,mortensen2025entropicriskoptimizationdiscounted}. \textit{Second}, we show how we can solve risk-aware GUMDPs with ERM objectives (ERM-GUMDP) by resorting to online planning techniques. In particular, we show that any ERM-GUMDP can be reformulated as a specific MDP — referred to as an occupancy MDP — with an ERM objective. This reformulation enables the use of online planning techniques to compute approximately optimal policies for arbitrary ERM-GUMDPs. We propose an MCTS-based approach to provably solve risk-aware GUMDPs up to any desired accuracy. \textit{Third}, we provide experimental results showing that our approach successfully optimizes for a spectrum of risk-aware behaviors in the context of GUMDPs under diverse tasks (standard MDPs, maximum state entropy exploration, imitation learning, and multi-objective MDPs).

\section{Background}
\label{sec:background}

\subsection{Markov decision processes} \label{sec:back_mdp}
MDPs \citep{puterman2014markov} can be defined by the tuple $\mathcal{M} = (\mathcal{S}, \mathcal{A}, \{\mat{P}^a : a \in \mathcal{A} \}, \vec{p_0}, c)$ where: $\mathcal{S}$ is the discrete state space; $\mathcal{A}$ is the discrete action space; $\{\mat{P}^a : a \in \mathcal{A} \}$ is a set of transition probability matrices $\mat{P}^a$ for each action $a \in \mathcal{A}$; $\vec{p_0} \in \Delta(\mathcal{S})$ is the distribution of initial states; and $c: \mathcal{S} \times \mathcal{A} \rightarrow \mathbb{R}$ is a cost function. The interaction between the agent and its environment, as modeled by the MDP framework, is as follows:
\begin{enumerate*}[label=(\roman*)]
    \item an initial state $\rvar{s}_0$ is sampled from $\vec{p}_0$;
    \item at each step $t$, the agent observes the state of the environment $\rvar{s}_t \in \mathcal{S}$ and chooses an action $\rvar{a}_t \in \mathcal{A}$.
\end{enumerate*}
Depending on the action chosen by the agent, the MDP evolves to a new state $\rvar{s}_{t+1} \in \mathcal{S}$ with probability given by $\mat{P}^{\rvar{a}_t}(\rvar{s}_t, \cdot)$, and the agent receives a cost $c(\rvar{s}_t, \rvar{a}_t)$; and (iii) the interaction repeats infinitely.

A decision rule $\pi_t$ specifies the action-selection mechanism at timestep $t$. A non-Markovian decision rule maps the history of states and actions to a probability distribution over actions, i.e., $\pi_t : \mathcal{S} \times (\mathcal{S} \times \mathcal{A})^t \rightarrow \Delta(\mathcal{A})$. In contrast, a Markovian decision rule depends only on the current state and maps it to a distribution over actions, i.e., $\pi_t : \mathcal{S} \rightarrow \Delta(\mathcal{A})$. Both types of decision rules can be deterministic if they map directly to actions rather than distributions: for non-Markovian rules, $\pi_t : \mathcal{S} \times (\mathcal{S} \times \mathcal{A})^t \rightarrow \mathcal{A}$, and for Markovian rules, $\pi_t : \mathcal{S} \rightarrow \mathcal{A}$. A policy $\pi = (\pi_0, \pi_1, \ldots)$ is a sequence of decision rules, one for each timestep. A policy is Markovian or non-Markovian if all its decision rules are Markovian or non-Markovian, respectively. Likewise, a policy is deterministic or stochastic if all its decision rules are deterministic or stochastic. We denote the set of non-Markovian policies by $\Pi_{\text{NM}}$, Markovian policies by $\Pi_{\text{M}}$, non-Markovian deterministic policies by $\Pi^\text{D}_{\text{NM}}$, and Markovian deterministic policies by $\Pi^\text{D}_{\text{M}}$. A policy is stationary if it uses the same decision rule at every timestep. We denote the set of stationary policies by $\Pi_{\text{S}}$ and the set of stationary deterministic policies by $\Pi_{\text{S}}^\text{D}$. Among the relationships between these classes, we highlight the following:
\begin{enumerate*}[label=(\roman*)]
    \item $\Pi_{\text{NM}}$ is the most general class;
    \item $\Pi_{\text{S}} \subset \Pi_{\text{M}} \subset \Pi_{\text{NM}}$; and
    \item $\Pi_{\text{S}}^\text{D} \subset \Pi_{\text{M}}^\text{D} \subset \Pi_{\text{NM}}^\text{D}$.
\end{enumerate*}

For a given policy $\pi \in \Pi_\text{NM}$, the interaction between the agent and the environment induces a random process $(\rvar{s}_0, \rvar{a}_0, \rvar{s}_1, \rvar{a}_1, \ldots)$. We denote by $\rvar{h}_t = (\rvar{s}_0, \rvar{a}_0, \rvar{s}_1, \rvar{a}_1, \ldots, \rvar{s}_t)$ the random history up to and including timestep $t$, and by $h_t = (s_0, a_0, s_1, a_1, \ldots, s_t) \in \S \times (\S \times \A)^t$ a particular realization of such a history. The random process $(\rvar{s}_0, \rvar{a}_0, \rvar{s}_1, \rvar{a}_1, \ldots)$ satisfies the following conditions:
\begin{enumerate*}[label=(\roman*)]
    \item $\mathbb{P}[\rvar{s}_0 = s] = p_0(s)$;
    \item $\mathbb{P}[\rvar{s}_{t+1} = s' \mid \rvar{h}_t, \rvar{a}_t] = P^{\rvar{a}_t}(\rvar{s}_t, s')$; and
    \item $\mathbb{P}[\rvar{a}_t = a \mid \rvar{h}_t] = \pi(a \mid \rvar{h}_t)$.
\end{enumerate*}
Let $(\Omega, \F, \mathbb{P}_\pi)$ be the probability space over trajectories $(\rvar{s}_0, \rvar{a}_0, \rvar{s}_1, \rvar{a}_1, \ldots)$ satisfying (i)–(iii), as defined in \citet{lattimore_2020}. We denote a sample trajectory by $\omega \in \Omega$, where $\omega = (s_0, a_0, s_1, a_1, \ldots)$. We also let $\mathbb{P}_\pi[\rvar{s}_t = s, \rvar{a}_t = a]$ be the probability of observing the state-action pair $(s, a)$ at timestep $t$ under the MDP while following policy $\pi$.

The expected discounted cumulative cost objective is
$J_{\gamma}^\pi = \mathbb{E}\left[\sum_{t=0}^\infty \gamma^t c(\rvar{s}_t, \rvar{a}_t) \right],$
where $\gamma \in (0,1)$ is the discount factor and the expectation is taken over the random trajectory of state-action pairs. The discounted state-action occupancy for policy $\pi$ is defined as
\begin{align}
\label{eq:discounted_state_action_occupancy}
d_{\pi}(s,a) &= (1-\gamma) \sum_{t=0}^\infty \gamma^t \mathbb{P}_\pi\left[\rvar{s}_t = s, \rvar{a}_t = a\right].
\end{align}
The expected discounted cumulative cost of policy $\pi$ can be rewritten as
$J_{\gamma}^\pi = \vec{c}^\top \vec{d}_{\pi},$
where
$\vec{d}_{\pi} = [d_{\pi}(s_0,a_0), \ldots, d_{\pi}(s_{|\mathcal{S}|},a_{|\mathcal{A}|})]^\top, \; \vec{c} = [c(s_0,a_0), \ldots, c(s_{|\mathcal{S}|},a_{|\mathcal{A}|})]^\top.$
We aim to find
$\pi^* = \argmin_{\pi \in \Pi_\text{S}} J_{\gamma}^\pi$, 
which can be formulated as a linear program \citep{puterman2014markov}.

\subsection{Risk-aware MDPs}
\label{sec:background:risk_aware_mdps}
Risk-aware MDPs \citep{NIPS2015_64223ccf} generalize the classical MDP framework by considering the associated variability of the discounted cumulative costs. This is done by considering
$J_{\gamma, \rho}^\pi = \rho\left(\sum_{t=0}^\infty \gamma^t c(\rvar{s}_t, \rvar{a}_t) \right),$
where $\rho$ is a \textit{risk measure} \citep{artzner_1999}, computed over the random trajectory.
Risk measures quantify an agent's attitude toward uncertainty and variability in outcomes. We note that for $\rho(\cdot)=\mathbb{E}[\cdot]$ we have the standard discounted MDP objective. The risk-aware objective allows the agent to reason over the entire distribution instead of just its expectation.

In this work, we consider risk-aware MDPs with an ERM objective \citep{howard_matheson_1972}. The ERM of a random variable $\rvar{z}$ is defined as
$\text{ERM}_\beta(\rvar{z}) = \frac{1}{\beta} \ln\left( \mathbb{E}\left[\exp(\beta \rvar{z})\right]\right)$,
where parameter $\beta \in (0, \infty)$ trades off between expected performance as $\beta \rightarrow 0$ and risk-aversion as $\beta$ increases. Also, the $\text{ERM}_\beta$ belongs to the family of optimized certainty equivalents (OCEs) \citep{bental_1986} since it can be equivalently defined as 
$\text{ERM}_\beta(\rvar{z}) = \min_{\lambda \in \mathbb{R}} \left\{ \lambda + \mathbb{E}\left[ u(\rvar{z} - \lambda) \right]\right\},$
where $u(x) = \frac{1}{\beta} \left( \exp(\beta x) - 1\right)$. In this work, we focus our attention on the $\text{ERM}_\beta$ and, thus, we implicitly refer to the $\text{ERM}_\beta$ whenever we write $\rho$.

\subsection{General-utility MDPs}
\label{sec:background:gumpds}
GUMDPs generalize utility specification by allowing the agent's objective to be written in terms of the frequency of visitation of state-action pairs. This is in contrast to the standard MDPs framework, where the objective of the agent is encoded by the cost function. 

We define an infinite-horizon discounted GUMDP as a tuple $\mathcal{M}_f = (\mathcal{S}, \mathcal{A}, \{\mat{P}^a : a \in \mathcal{A} \}, \vec{p_0}, f)$, where $\mathcal{S}$, $\mathcal{A}$, $\{\mat{P}^a : a \in \mathcal{A} \}$, and $\vec{p_0}$ are defined in a similar way to the standard MDP formulation, as introduced in \Cref{sec:back_mdp}. The function $f: \Delta(\mathcal{S} \times \mathcal{A}) \rightarrow \mathbb{R}$ encodes the objective of the agent, which depends on a discounted occupancy $\vec{d}$, as defined in \eqref{eq:discounted_state_action_occupancy}. We then aim to find
$\pi^* \in \argmin_{\pi \in \Pi_\text{S}} f(\vec{d}_\pi).$
If $f$ is a linear function, then we are under the standard MDP setting. If $f$ is convex, then we are under the convex MDP/RL setting \citep{zahavy_2021}.

In this work, we consider four tasks, each associated with a particular objective function:
\begin{enumerate*}[label=(\roman*)]
    \item cost minimization (standard MDP) \citep{puterman2014markov}, where $f(\vec{d}) = \vec{c}^\top \vec{d}$ and $\vec{c} \in \mathbb{R}^{|\S||\A|}$ is the column vector encoding the cost function;
    \item maximum state entropy exploration (MSEE) \citep{hazan_2019}, where $f(\vec{d}) = \vec{d}^\top\log(\vec{d})$, i.e., we aim to find a policy that visits all state-action pairs as uniformly as possible;
    \item imitation learning (IL) \citep{abbeel_2004}, where $f(\vec{d}) = \| \vec{d} - \vec{d}_{\pi_b}\|_2^2$ with $\vec{d}_{\pi_b} \in \Delta(\S \times \A)$, i.e., we aim to find a policy such that its induced occupancy is as similar as possible to the occupancy $\vec{d}_{\pi_b}$ induced by a given behavior policy $\pi_b$; and
    \item multi-objective (MO) MDPs, where $f(d) = g(\vec{d}^\top \vec{c}_1, \ldots, \vec{d}^\top \vec{c}_k)$, where $\vec{c}_1, \ldots, \vec{c}_k \in \mathbb{R}^{|\S||\A|}$ are vectorized cost functions and $g$ is an utility function \citep{radulescu_2019}.
\end{enumerate*}
Nevertheless, we highlight that our results apply to any task that can be modeled using the GUMDPs framework. We refer to \cite{zahavy_2021} for a comprehensive list of the different objectives considered by previous works.

\section{Towards Risk-Aware GUMDPs}
We now introduce the framework of risk-aware GUMDPs. Because risk-awareness requires reasoning about the underlying distribution of objective values, we begin by adopting a distributional perspective on GUMDPs. Similarly to distributional RL \citep{bdr2023}, we employ the following assumption to ensure that all random variables defined below are bounded and well-defined.  

\begin{assumption}
    \label{assumption:bounded_objective}
    The objective $f$ satisfies $f(\vec{d}) \in \Y = [-R,R]$, $\forall \vec{d} \in \Delta(\S \times \A)$, where $R \in \mathbb{R}^+$.
\end{assumption}

\subsection{A distributional perspective on GUMDPs}
For a given fixed policy $\pi \in \Pi_\text{NM}$, let $\rvec{d}^\pi : \Omega \rightarrow \Delta(\S \times \A)$ be the random vector defined on the probability space $(\Omega, \F, \mathbb{P}_\pi)$ with entries
\begin{equation}
    \rvar{d}^\pi_{s,a}(\omega) = (1-\gamma) \sum_{t=0}^\infty \gamma^t \mathbf{1}(s_t=s,a_t=a). \label{eq:empirical_occupancy}
\end{equation}

The random vector above is well-defined since the mapping $\rvec{d}^\pi$ is $\F/\B(\Delta(\S \times \A))$ measurable, where $\B(\Delta(\S \times \A))$ denotes the Borel $\sigma$-algebra of $\Delta(\S \times \A)$. Let also $\rvar{f}^\pi : \Omega \rightarrow \Y$ be the random variable with support in $\Y$ such that
$$\rvar{f}^\pi(\omega) = (f \circ \rvec{d}^\pi) (\omega) = f(\rvec{d}^\pi(\omega)),$$
satisfying
$$\mathbb{P}\left[ \rvar{f}^\pi \le \eta \right] = \mathbb{P}_\pi\left[\left\{\omega \in \Omega : f(\rvec{d}^\pi(\omega)) \le \eta \right\}\right], \quad \forall \eta \in \Y,$$
where we assume $f$ is a Borel measurable function. Essentially, the random variable $\rvar{f}^\pi$ is defined as the composition of two measurable mappings. If $f$ is continuous, then $f$ is Borel measurable.

We refer to~\Cref{fig:motivating_example:c} for an illustration of the density associated with the random variable $\rvar{f}^\pi$ for two arbitrary policies. As observed, the underlying distributions induced by different policies can differ substantially and, therefore, depending on the exact preferences of the decision-maker, one policy may be preferable over the other. Risk measures, such as the $\text{ERM}_\beta$, allow for the comparison between distributions induced by different policies.

\subsection{Risk-aware GUMDPs}
In the context of GUMDPs, we denote $\L$ the set of possible random variables with support in $\Y$ associated with each of the probability spaces $(\Omega, \F, \mathbb{P}_\pi)$, for all $\pi \in \Pi_\text{NM}$. Note that $\rvar{f}^\pi \in \L$ for any $\pi \in \Pi_\text{NM}$. Then, we consider risk measures $\rho: \L \rightarrow\mathbb{R}$ that associate any random variable in $\L$ with a scalar value. In this work, we are particularly interested in the $\text{ERM}_\beta$, as introduced in~\cref{sec:background:risk_aware_mdps}. We aim to find 
\begin{equation}
    \pi^* = \argmin_{\pi \in \Pi_\text{NM}} \rho(\rvar{f}^\pi). \label{eq:risk_aware_gumdp_objective}
\end{equation}
For any policy $\pi \in \Pi_\text{NM}$, we define its optimality gap as
$\mathrm{OptGap}(\pi) = \rho\left(\rvar{f}^\pi\right) -  \min_{\pi' \in \Pi_{\text{NM}}} \rho\left(\rvar{f}^{\pi'}\right).$
Essentially, $\mathrm{OptGap}(\pi)$ measures how suboptimal a given policy $\pi$ is compared to the best policy.

\paragraph{Policy optimization in risk-aware GUMDPs}
While in the context of risk-aware MDPs with ERM objectives the class of deterministic and Markovian policies, $\Pi_\text{M}^\text{D}$, suffices for optimality \citep{hau_2023}, this is no longer the case in the context of risk-aware GUMDPs with ERM objectives. As shown in \cite{santos_2025} in the context of GUMDPs, if $\rho(\cdot) = \mathbb{E}[\cdot]$ then the class of Markovian policies is strictly dominated by the class of non-Markovian policies. Since, in the context of risk-aware GUMDPs, as $\beta \rightarrow 0$ the $\text{ERM}_\beta$ approaches $\mathbb{E}[\cdot]$, non-Markovian policies are also needed, in general, in the context of solving GUMDPs with ERM objectives.

\section{Solving Risk-Aware GUMDPs with Online Planning}
\label{sec:solving_GUMDPs_with_online_planning}
In this section, we study how to solve risk-aware GUMDPs using online planning techniques. We begin by establishing a close connection between risk-aware GUMDPs and a particular risk-aware MDP in which the agent tracks the accrued occupancy at each timestep. Leveraging this formulation, we show that solving the risk-aware MDP with online planning techniques yields approximately optimal action selection for the original risk-aware GUMDP.

\begin{assumption}
    \label{assumption:lipschitz_objective}
    The objective function $f$ is $L_f$-Lipschitz, i.e., $|f(\vec{d_1}) - f(\vec{d_2}) | \le L_f \| \vec{d}_1 - \vec{d}_2 \|_1$, for any $\vec{d}_1, \vec{d}_2 \in \Delta(\S \times \A)$.
\end{assumption}

We refer to~\Cref{appendix:A} for the derivation of $L_f$-Lipschitz constants for the objective functions considered.

\subsection{Computing optimal policies by resorting to finite-horizon GUMDPs}
We start by showing that we can compute an approximately optimal policy for our infinite-horizon discounted risk-aware objective \eqref{eq:risk_aware_gumdp_objective} by resorting to a finite-horizon formulation of the same problem. For any policy $\pi \in \Pi_\text{NM}$, let $\rvar{f}^\pi_H : \Omega \rightarrow \Y$ be the random variable with support in $\Y$ such that
\begin{align}
    \rvar{f}^\pi_H(\omega) &= (f \circ \rvec{d}^{\pi,H}) (\omega) = f(\rvec{d}^{\pi,H}(\omega)) \label{eq:truncated_f_random_var_definition} \\
    \rvar{d}^{\pi,H}_{s,a}(\omega) &= \frac{1-\gamma}{1-\gamma^H} \sum_{t=0}^{H-1} \gamma^t \mathbf{1}(s_t=s,a_t=a)\label{eq:empirical_truncated_occupancy}.
\end{align}
Essentially, random variable $\rvar{f}^\pi_H$ is defined in a similar fashion to the random variable $\rvar{f}^\pi$, but the former considers a random occupancy $\rvec{d}^{\pi,H}$ that only takes into account the state-action pairs visited up to a given horizon $H \in \mathbb{N}$. We state the following result (proof in~\Cref{appendix:B}).

\begin{proposition}[Optimality gap decomposition] \label{proposition:opt_gap_decomposition}
    For arbitrary $\pi \in \Pi_\textnormal{NM}$, it holds that
    \begin{equation*}
        \mathrm{OptGap}(\pi) \le \underbrace{ \rho(\rvar{f}^\pi_H)  - \min_{\pi' \in \Pi_\text{NM}} \left\{ \rho(\rvar{f}^{\pi'}_H) \right\} }_{\text{$ = \mathrm{OptGap}_H(\pi)$}} \; + \; 8 L_f \gamma^H,
    \end{equation*}
    where $\mathrm{OptGap}_H(\pi)$ is the optimality gap of policy $\pi$ for the truncated objective with horizon $H$.
\end{proposition}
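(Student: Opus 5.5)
Our approach is to show that, for every policy, the truncated random variable $\rvar{f}^\pi_H$ is uniformly close to $\rvar{f}^\pi$ pathwise, and then to transfer this pathwise bound through the risk measure. The two ingredients are Assumption~\ref{assumption:lipschitz_objective} and two standard properties of $\text{ERM}_\beta$: monotonicity, and translation equivariance ($\rho(\rvar{z}+c)=\rho(\rvar{z})+c$ for constant $c$). Both follow immediately from $\text{ERM}_\beta(\rvar{z}) = \frac{1}{\beta}\ln \mathbb{E}[\exp(\beta \rvar{z})]$. Together they imply that $\rvar{z}_1 \le \rvar{z}_2 + \epsilon$ almost surely gives $\rho(\rvar{z}_1)\le\rho(\rvar{z}_2)+\epsilon$. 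Hence $|\rvar{z}_1-\rvar{z}_2|\le\epsilon$ everywhere yields $|\rho(\rvar{z}_1)-\rho(\rvar{z}_2)|\le\epsilon$.

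\textbf{Step 1 (pathwise occupancy bound).} Fix any trajectory $\omega$. We bound $\|\rvec{d}^{\pi}(\omega)-\rvec{d}^{\pi,H}(\omega)\|_1$ by splitting $\rvec{d}^\pi(\omega) = (1-\gamma^H)\rvec{d}^{\pi,H}(\omega) + \gamma^H \rvec{q}(\omega)$. Here $\rvec{q}(\omega)=(1-\gamma)\sum_{t\ge H}\gamma^{t-H}\mathbf{1}(s_t=\cdot,a_t=\cdot)$ is itself a probability vector. Then
\[
\rvec{d}^\pi(\omega)-\rvec{d}^{\pi,H}(\omega)=\gamma^H\bigl(\rvec{q}(\omega)-\rvec{d}^{\pi,H}(\omega)\bigr),
\]
whose $\ell_1$ norm is at most $2\gamma^H$, since both vectors lie in the simplex. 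By Assumption~\ref{assumption:lipschitz_objective}, $|\rvar{f}^\pi(\omega)-\rvar{f}^\pi_H(\omega)|\le 2L_f\gamma^H$ for every $\omega$ and every $\pi\in\Pi_\text{NM}$. Both random variables live on the same probability space $(\Omega,\F,\mathbb{P}_\pi)$, so the comparison is legitimate.

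\textbf{Step 2 (transfer through $\rho$).} By the observation above, $|\rho(\rvar{f}^\pi)-\rho(\rvar{f}^\pi_H)|\le 2L_f\gamma^H$ for all $\pi$. Consequently the minima also satisfy
\[
\Bigl|\min_{\pi'}\rho(\rvar{f}^{\pi'})-\min_{\pi'}\rho(\rvar{f}^{\pi'}_H)\Bigr|\le 2L_f\gamma^H.
\]
If minimizers do not exist, we argue with infima and an $\epsilon$-optimal policy, letting $\epsilon\to 0$.

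\textbf{Step 3 (combine).} We write
\[
\mathrm{OptGap}(\pi)=\bigl[\rho(\rvar{f}^\pi)-\rho(\rvar{f}^\pi_H)\bigr]+\mathrm{OptGap}_H(\pi)+\Bigl[\min_{\pi'}\rho(\rvar{f}^{\pi'}_H)-\min_{\pi'}\rho(\rvar{f}^{\pi'})\Bigr].
\]
Each bracketed term is at most $2L_f\gamma^H$, so $\mathrm{OptGap}(\pi)\le\mathrm{OptGap}_H(\pi)+4L_f\gamma^H$, which is stronger than the stated $8L_f\gamma^H$.

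The slack suggests the authors used a cruder occupancy bound. For instance, they may bound $\|\rvec{d}^\pi-\rvec{d}^{\pi,H}\|_1$ by separately accounting for the renormalization factor $\frac{1}{1-\gamma^H}$ and the tail, getting $4\gamma^H$. Our bound implies theirs either way. The main point requiring care is Step 2, that is, verifying that $\text{ERM}_\beta$ is $1$-Lipschitz with respect to the sup-norm on the common sample space. For that step we plan to use the inequality $\exp(\beta\rvar{z}_1)\le \exp(\beta\epsilon)\exp(\beta\rvar{z}_2)$ inside the expectation and then take logarithms.
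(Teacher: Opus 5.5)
Your proof is correct. It proves a sharper bound, $\mathrm{OptGap}(\pi)\le \mathrm{OptGap}_H(\pi)+4L_f\gamma^H$, which implies the stated one.

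Your Steps 1 and 2 match the paper's two auxiliary lemmas:
\begin{itemize}
\item the pathwise bound $|\rvar{f}^\pi(\omega)-\rvar{f}^\pi_H(\omega)|\le 2L_f\gamma^H$;
\item its transfer through $\rho$ by monotonicity and translation equivariance.
\end{itemize}
Your split $\rvec{d}^\pi=(1-\gamma^H)\rvec{d}^{\pi,H}+\gamma^H\rvec{q}$ is a tidier route to the same $2\gamma^H$ occupancy bound that the paper gets by expanding the sums.

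The difference is in the final assembly. The paper first proves a separate lemma, $\mathrm{OptGap}(\pi^*_H)\le 4L_f\gamma^H$, for the truncated-optimal policy $\pi^*_H$. It then bounds $\mathrm{OptGap}(\pi)$ by a chain of triangle inequalities routed through $\rho(\rvar{f}^{\pi^*_H})$ and $\rho(\rvar{f}^{\pi^*_H}_H)$. That chain pays $2L_f\gamma^H$ for $\pi$, another $2L_f\gamma^H$ for $\pi^*_H$, and the $4L_f\gamma^H$ of the lemma, for a total of $8L_f\gamma^H$. In effect, the detour converts $\pi^*_H$'s value from truncated to untruncated and back, which costs $4L_f\gamma^H$ and buys nothing.

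Your exact three-term telescoping avoids this. It uses the fact that a uniform perturbation of the objective by $2L_f\gamma^H$ moves its infimum by at most $2L_f\gamma^H$, so the truncation error is paid only twice. It also does not require a minimizer $\pi^*_H$ to exist.

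Your guess about where the slack comes from is therefore wrong. The paper's occupancy bound is already $2\gamma^H$, not $4\gamma^H$; the factor of two is lost entirely in its final triangle-inequality chain.
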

The result above shows that we can resort to the truncated objective to compute approximately optimal policies, up to any desired accuracy, for the infinite-horizon risk-aware objective \eqref{eq:risk_aware_gumdp_objective}. Therefore, we focus our attention on solving
$\pi^* = \argmin_{\pi \in \Pi_\text{NM}} \rho(\rvar{f}^\pi_H),$
where $\rvar{f}^\pi_H$ is defined in \eqref{eq:truncated_f_random_var_definition}. Since we now focus on the case of finite-horizon trajectories, we assume that, for a given policy $\pi \in \Pi_\text{NM}$, the interaction between the agent and the environment gives rise to a random process $(\rvar{s}_0, \rvar{a}_0, \rvar{s}_1, \rvar{a}_1, \ldots, \rvar{s}_{H-1}, \rvar{a}_{H-1})$ such that:
\begin{enumerate*}[label=(\roman*)]
    \item $\PP{\rvar{s}_0 = s} = p_0(s)$;
    \item $\PP{\rvar{s}_{t+1} = s' | \rvar{h}_{t}, \rvar{a}_t} = P^{\rvar{a}_t}(\rvar{s}_{t}, s')$; and
    \item $\PP[]{\rvar{a}_t = a | \rvar{h}_{t}} = \pi(a|\rvar{h}_{t})$.
\end{enumerate*}
We thus let from now on $(\Omega, \F, \mathbb{P}_\pi)$ be the probability space over the sequence of random variables $(\rvar{s}_0, \rvar{a}_0, \rvar{s}_1, \rvar{a}_1, \ldots, \rvar{s}_{H-1}, \rvar{a}_{H-1})$ that satisfies conditions (i)-(iii) above. We write specific trajectories as $\omega \in \Omega$, with $\omega = (s_0, a_0, s_1, a_1, \ldots, s_{H-1}, a_{H-1})$. We highlight that the probability of a given trajectory $\omega \in \Omega$ under policy $\pi \in \Pi_\text{NM}$ can be calculated as $\PP[\pi]{\omega} = p_0(s_0) \cdot \pi(a_0|h_0) \cdot P^{a_0}(s_0, s_1) \cdot \pi(a_1|h_1) \cdot P^{a_1}(s_1, s_2) \ldots P^{a_{H-2}}(s_{H-2}, s_{H-1}) \cdot \pi(a_{H-1}|h_{H-1})$.

\subsection{The occupancy MDP}
\label{sec:occupancy_mdp}
To derive our planning algorithms for solving risk-aware GUMDPs, we make use of a finite-horizon MDP derived from the original GUMDP formulation. In particular, we consider the occupancy MDP \citep{santos_2025}, defined by the tuple $\mathcal{M}_{\text{O}} = \{\mathcal{S}_\text{O}, \mathcal{A}_\text{O}, \{\mat{P}_\text{O}^a : a \in \mathcal{A} \}, \vec{p}_{0,\text{O}}, c_\text{O}, H\}$, where $\S_\text{O} = \mathcal{S} \times \O$ is the discrete state space and $\O$ is the set of occupancies up to length $H-1$ in the original GUMDP. We let $\{s,\vec{o}\}$ be a state of the occupancy MDP such that $s \in \S$ is a state from the original GUMDP and $\vec{o} \in \O$ is a $|\S||\A|$-dimensional vector that keeps track of the running occupancy of the agent up to a given timestep. Then, $\A_\text{O} = \A$ is the action space and $\vec{p}_{0,\text{O}}$ is such that $p_{0,\text{O}}(\{s,\vec{o}\}) = p_0(s)$ if $\vec{o} = [0,\ldots,0]$ and zero otherwise. The dynamics are as follows: (i) component $\rvar{s}_{t+1} \sim \mat{P}^{\rvar{a}_t}(\cdot | \rvar{s}_t)$ evolves according to the dynamics of the original GUMDP; and (ii) the running occupancy evolves deterministically as $o_{t+1}(s,a) = \gamma^t + o_t(s,a)$ if $s = \rvar{s}_t$ and $a = \rvar
{a}_t$, and $o_{t+1}(s,a) = o_t(s,a)$ otherwise. Finally, $H \in \mathbb{N}$ denotes the horizon and the cost function $c_\text{O} : \S \times \O \rightarrow \mathbb{R}$ is defined as 
\begin{equation*}
    c_\text{O}(\{s,\vec{o}\}) = \begin{cases}
    0 & \text{if $t < H $}, \\
    f\left(\frac{1-\gamma}{1-\gamma^H} \vec{o}\right) & \text{if $t = H $}.
  \end{cases}
\end{equation*}

Stationary policies $ \pi_\text{O} \in \Pi_\text{S}$ for $\M_\text{O}$ are mappings of the type $\pi_\text{O} : \S \times \O \rightarrow \Delta(\A)$. For a given policy $\pi_\text{O} \in \Pi_\text{S}$, the interaction between the agent and the occupancy MDP gives rise to a random process $(\{\rvar{s}_0,\rvec{o}_0\}, \rvar{a}_0, \{\rvar{s}_1,\rvec{o}_1\}, \rvar{a}_1, \ldots, \{\rvar{s}_{H}, \rvec{o}_H\})$ associated with the probability space  $(\Omega_\text{O}, \F_\text{O}, \mathbb{P}_{\pi_\text{O}}^O)$. We write specific trajectories as $\omega_\text{O} \in \Omega_\text{O}$, with $\omega_\text{O} = (\{s_0,\vec{o}_0\}, a_0, \{s_1,\vec{o}_1\}, a_1, \ldots, \{s_{H}, \vec{o}_H\})$.

For any policy $ \pi_\text{O} \in \Pi_\text{S}$, we let $\rvar{J}_\text{O}^{\pi_\text{O}}: \Omega_\text{O} \rightarrow \Y$ be the random variable with support in $\Y$ associated with the probability space $(\Omega_\text{O}, \F_\text{O}, \mathbb{P}_{\pi_\text{O}}^\text{O})$ such that
$
    \rvar{J}_\text{O}^{\pi_\text{O}}(\omega_\text{O}) = \sum_{t=0}^H c_\text{O}(\{s_t, \vec{o}_t\}).
$
We reproduce the following result from \citet{santos_2025}.

\begin{lemma}[One-to-one mapping between histories in $\mathcal{M}_f$ and states in $\mathcal{M}_\text{O}$]\label{lemma:one_to_one_mapping_histories_states}
    There exists a one-to-one mapping between histories $h_l = (s_0,a_0,s_1,a_1, \ldots, s_l) \in \S \times (\S \times \A)^l$ in $\M_f$, with $0 \le l \le H-1$, and states $\{s,\vec{o}\} \in \S \times \O$ in $\M_\text{O}$.
\end{lemma}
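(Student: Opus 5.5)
The plan is to write down the natural map explicitly and show it is a bijection onto the reachable part of $\S \times \O$. Given a history $h_l = (s_0,a_0,\ldots,s_l)$ with $0 \le l \le H-1$, define
\begin{equation*}
\Phi(h_l) = \{s_l, \vec{o}_l\}, \qquad o_l(s,a) = \sum_{t=0}^{l-1} \gamma^t \mathbf{1}(s_t = s, a_t = a).
\end{equation*}
By induction on $l$, this is exactly the state reached in $\M_\text{O}$ after following the action sequence $a_0,\ldots,a_{l-1}$ and observing $s_0,\ldots,s_l$. The base case is $\vec{o}_0 = \vec{0}$, which matches $\vec{p}_{0,\text{O}}$. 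The inductive step is the deterministic update $o_{t+1}(s_t,a_t) = o_t(s_t,a_t) + \gamma^t$. Since $\O$ is, by definition, the set of running occupancies generated by histories of length at most $H-1$, $\Phi$ is surjective onto $\S \times \O$ (more precisely, onto the states that can actually occur). It therefore remains to prove injectivity, i.e.\ to decode $h_l$ from $\{s_l,\vec{o}_l\}$.

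Decoding proceeds in three steps. First, recover the length $l$ from the total mass $\sum_{s,a} o_l(s,a) = \sum_{t=0}^{l-1}\gamma^t = \frac{1-\gamma^l}{1-\gamma}$. This quantity is strictly increasing in $l$, so $l$ is determined. Second, for each pair $(s,a)$, recover the set $T_{s,a} \subseteq \{0,\ldots,l-1\}$ of time steps at which $(s,a)$ was visited from the number $o_l(s,a) = \sum_{t \in T_{s,a}} \gamma^t$. The sets $T_{s,a}$ partition $\{0,\ldots,l-1\}$. Knowing, for every $t$, the pair $(s_t,a_t)$ then reconstructs $(s_0,a_0,\ldots,s_{l-1},a_{l-1})$. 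Third, append the current state $s_l$, which is read off directly from the first component. This gives a left inverse of $\Phi$, hence injectivity.

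The main obstacle is the second step. It requires that distinct subsets $T \subseteq \{0,\ldots,H-2\}$ have distinct sums $\sum_{t\in T}\gamma^t$. This is not automatic for every $\gamma$; for example, with the golden-ratio value of $\gamma$ one has $\gamma^0 = \gamma^1 + \gamma^2$. I would handle it in two complementary ways.

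\begin{enumerate*}[label=(\roman*)]
    \item For $\gamma < 1/2$, I would argue greedily: $\gamma^t > \sum_{k > t}\gamma^k$, so the smallest element of $T$ is determined by the sum, and one recurses on the remainder.
    \item For general $\gamma$, I would observe that a coincidence between two distinct subsets means $\gamma$ is a root of a nonzero polynomial with coefficients in $\{-1,0,1\}$ and degree at most $H-2$. There are finitely many such polynomials, each with finitely many roots, so uniqueness holds for all but finitely many $\gamma \in (0,1)$.
\end{enumerate*}

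If one wants the statement for every $\gamma$, the cleanest route is the one implicit in \citet{santos_2025}: regard $\O$ as the set of occupancy vectors indexed by the histories that generate them (equivalently, track the per-time-step visitation indicators, of which $\vec{o}$ is a deterministic function). Under this reading injectivity is immediate, and $f$ only ever consumes $\vec{o}$ through $c_\text{O}$, so the construction is unaffected. I would state explicitly which of these conventions is adopted before giving the decoding argument.
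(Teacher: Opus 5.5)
Your argument is correct. There is no in-paper proof to compare it with, because the lemma is only imported from \citet{santos_2025}. Your decoding argument therefore supplies what the paper leaves implicit: read $l$ off $\|\vec{o}_l\|_1=(1-\gamma^l)/(1-\gamma)$, recover the visit-time sets coordinatewise, and take $s_l$ from the first component.

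The caveat you raise is a genuine defect of the statement, not a gap in your proof. Take $\gamma=(\sqrt{5}-1)/2$ and $H\ge 4$, states $x,y,z$ and actions $u,v$ with $(x,u)\neq(y,v)$. The histories $(x,u,y,v,y,v,z)$ and $(y,v,x,u,x,u,z)$ both map to $\{z,\vec{e}_{x,u}+\vec{e}_{y,v}\}$, where $\vec{e}_{s,a}$ is the indicator vector of $(s,a)$. Your map is a surjection between finite sets, so this collision rules out any one-to-one correspondence when $\O$ is read as a set of vectors, which is how the paper defines it. The lemma therefore needs either a hypothesis on $\gamma$ or your indexed convention, and you are right to say explicitly which one you adopt.

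You can sharpen your case analysis into a clean criterion. Divide a coincidence polynomial by its lowest power of $x$. The result is a nonzero polynomial with coefficients in $\{-1,0,1\}$ whose constant and leading coefficients are both $\pm 1$. By the rational root theorem its only rational roots are $\pm 1$. Hence the natural map is injective for every rational $\gamma\in(0,1)$ and every $H$, including the values $0.9$ and $0.99$ used in the experiments. It is also injective for every transcendental $\gamma$. The exceptional discount factors are irrational algebraic numbers such as $(\sqrt{5}-1)/2$. Your greedy argument also covers $\gamma=1/2$, since the sums are finite.

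The indexed convention has a trade-off. It makes the lemma tautological, but then the lemma no longer supports the paper's remark after \Cref{theo:gumdp_equivalence_occupancy_MDP} that tracking the running occupancy suffices. For exceptional $\gamma$, that remark is better justified by a sufficient-statistic argument that needs no injectivity:
\begin{enumerate*}[label=(\roman*)]
    \item the next state of $\M_\text{O}$ and the terminal cost depend on $h_l$ only through $\{s_l,\vec{o}_l\}$, with $l$ recoverable from $\|\vec{o}_l\|_1$;
    \item the ERM of a purely terminal cost is a monotone transform of $\mathbb{E}[\exp(\beta\, c_\text{O})]$, i.e.\ of a standard expected terminal cost on the Markov state $\{s,\vec{o}\}$.
\end{enumerate*}
So stationary policies of $\M_\text{O}$ still attain $\min_{\pi\in\Pi_\text{NM}}\rho(\rvar{f}^{\pi}_H)$.
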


An important conclusion that can be derived from the result above is that there exists a one-to-one mapping between non-Markovian policies for $\M_f$ and stationary policies for $\M_\text{O}$. This is because every state in $\M_\text{O}$ is uniquely associated with a particular history in $\M_f$ (and vice versa), as the result above shows. With this in mind, we now state the following results (proofs in~\Cref{appendix:B}).

\begin{lemma}[Equivalence in distribution between $\rvar{f}_H^\pi$ and $\rvar{J}^{\pi_\text{O}}_\text{O}$] \label{lemma:equivalence_in_distribution}
    For any horizon $H \in \mathbb{N}$ and policy $\pi \in \Pi_\text{NM}$, it holds that $\rvar{f}_H^\pi \overset{D}{=} \rvar{J}_\text{O}^{\pi_\text{O}}$, i.e., random variables $\rvar{f}_H^\pi$ and $\rvar{J}_\text{O}^{\pi_\text{O}}$ are equal in distribution, where $\pi_\text{O}$ is the stationary policy for $\M_\text{O}$ associated with the non-Markovian policy $\pi$ for $\M_f$.
\end{lemma}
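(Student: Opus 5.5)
The plan is to build an explicit bijection between trajectories of $\M_f$ of length $H$ and trajectories of $\M_\text{O}$, and to show that it preserves both probabilities and the value of the random variables. Given $\omega = (s_0,a_0,\ldots,s_{H-1},a_{H-1}) \in \Omega$, the deterministic occupancy dynamics of $\M_\text{O}$ define a corresponding trajectory $\Phi(\omega) = (\{s_0,\vec{o}_0\}, a_0, \ldots, \{s_H,\vec{o}_H\})$. Here $\vec{o}_0 = \vec{0}$ and $\vec{o}_{t+1} = \vec{o}_t + \gamma^t \vec{e}_{s_t,a_t}$, where $\vec{e}_{s,a}$ is the indicator vector of $(s,a)$. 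The final state component $s_H$ is irrelevant to the cost: it only appears in $c_\text{O}$ through the occupancy, so we can marginalize it out, or equivalently treat $\Phi(\omega)$ as the set of extensions over $s_H$.

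\emph{Step 1: the random variables agree pointwise.} By induction, $\vec{o}_t = \sum_{k=0}^{t-1}\gamma^k \vec{e}_{s_k,a_k}$. Hence $\frac{1-\gamma}{1-\gamma^H}\vec{o}_H = \rvec{d}^{\pi,H}(\omega)$ by \eqref{eq:empirical_truncated_occupancy}. Since $c_\text{O}$ vanishes for $t<H$, we get $\rvar{J}_\text{O}^{\pi_\text{O}}(\Phi(\omega)) = f(\rvec{d}^{\pi,H}(\omega)) = \rvar{f}^\pi_H(\omega)$.

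\emph{Step 2: the probabilities agree.} Using Lemma~\ref{lemma:one_to_one_mapping_histories_states}, each state $\{s_t,\vec{o}_t\}$ reached along $\Phi(\omega)$ corresponds to the history $h_t$, and $\pi_\text{O}$ is defined by $\pi_\text{O}(a\mid\{s_t,\vec{o}_t\}) = \pi(a\mid h_t)$. The transition probabilities of $\M_\text{O}$ factor as $P^{a_t}(s_t,s_{t+1})$ times an indicator that the occupancy update is the deterministic one. The initial distribution gives $p_0(s_0)$ when $\vec{o}_0=\vec{0}$. Multiplying these factors, $\mathbb{P}^\text{O}_{\pi_\text{O}}[\Phi(\omega)]$, after summing out $s_H$ (which contributes $\sum_{s_H}P^{a_{H-1}}(s_{H-1},s_H)=1$), equals the product formula for $\PP[\pi]{\omega}$ given earlier. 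Conversely, any $\M_\text{O}$ trajectory of positive probability must have occupancies generated by the deterministic rule from $\vec{o}_0=\vec{0}$. So $\Phi$ is a bijection onto the support, modulo $s_H$.

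\emph{Step 3: conclude equality in distribution.} For every $\eta\in\Y$, both $\mathbb{P}[\rvar{f}^\pi_H\le\eta]$ and $\mathbb{P}[\rvar{J}_\text{O}^{\pi_\text{O}}\le\eta]$ are finite sums over the same trajectories of the same probabilities. Therefore the CDFs coincide.

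The main obstacle is making Step 2 rigorous. The one-to-one correspondence of Lemma~\ref{lemma:one_to_one_mapping_histories_states} must be compatible with the occupancy dynamics, that is, the state reached in $\M_\text{O}$ after following $h_t$ must be exactly the state that the lemma associates with $h_t$. This compatibility is what makes $\pi_\text{O}$ well defined and lets it reproduce $\pi(\cdot\mid h_t)$ at every step. I would verify it directly from the recursive occupancy construction rather than treat the lemma as a black box.
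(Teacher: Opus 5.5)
Your proposal is correct and is essentially the paper's argument. The paper likewise factorizes $\mathbb{P}^{\text{O}}_{\pi_{\text{O}}}[\omega_{\text{O}}]$ as $\mathbb{P}_\pi[\omega]\cdot P^{a_{H-1}}(s_{H-1},s_H)$ times indicators that the $\vec{o}_t$ follow the deterministic update, identifies $f\bigl(\frac{1-\gamma}{1-\gamma^H}\vec{o}_H\bigr)$ with $f(\rvec{d}^{\pi,H}(\omega))$, and sums out the occupancy vectors and $s_H$ to match the CDFs, which is exactly your map $\Phi$ with $s_H$ marginalized.

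The point you flag as the main obstacle is precisely what the paper imports from Lemma~\ref{lemma:one_to_one_mapping_histories_states}. Its correspondence is the natural map $h_t\mapsto\{s_t,\vec{o}_t\}$, so compatibility with the dynamics is automatic, and the real content is injectivity, which is what makes $\pi_{\text{O}}$ well defined. If you check this directly, note that it holds for, e.g., rational $\gamma$ or $\gamma<1/2$, but can fail for special values such as $\gamma+\gamma^2=1$. There, the histories $(s,a,s,a',s,a',s)$ and $(s,a',s,a,s,a,s)$ reach the same state, a caveat that affects the paper's lemma equally.
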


\begin{theorem}[Solving the risk-aware $\M_f$ is ``equivalent'' to solving the risk-aware $\M_\text{O}$] \label{theo:gumdp_equivalence_occupancy_MDP}
    For a given risk measure $\rho : \L \rightarrow \mathbb{R}$, the problem of finding a policy $\pi \in \Pi_\text{NM}$ satisfying $\mathrm{OptGap}_H(\pi) \le \epsilon$, for any $\epsilon \in \mathbb{R}_0^+$, can be reduced to the problem of finding a policy $\pi_\text{O} \in \Pi_\text{S}$ satisfying
    $$ \rho\left(\rvar{J}_\text{O}^{\pi_\text{O}}\right) - \min_{\pi'_\text{O} \in \Pi_\text{S}} \rho\left(\rvar{J}_\text{O}^{\pi'_\text{O}}\right) \le \epsilon.$$
    In particular, if $\pi^*_\text{O} = \argmin_{\pi_\text{O} \in \Pi_\text{S}} \rho(\rvar{J}_\text{O}^{\pi_\text{O}})$, then the corresponding non-Markovian policy $\pi$ in $\M_f$ satisfies $\mathrm{OptGap}_H(\pi) = 0$. It also holds that
    $\mathrm{OptGap}_H(\pi) = \rho\left(\rvar{J}_\text{O}^{\pi_\text{O}}\right) - \min_{\pi'_\text{O} \in \Pi_\text{S}} \rho\left(\rvar{J}_\text{O}^{\pi'_\text{O}}\right),$
    where $\pi_\text{O}$ is the stationary policy for $\M_\text{O}$ associated with the non-Markovian policy $\pi$ for $\M_f$.
\end{theorem}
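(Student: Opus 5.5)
The plan is to use the bijection between policies that follows from \Cref{lemma:one_to_one_mapping_histories_states}, together with the distributional equivalence of \Cref{lemma:equivalence_in_distribution}. The key observation is that $\rho$ depends on a random variable only through its distribution. The $\text{ERM}_\beta$ is defined through $\mathbb{E}[\exp(\beta \rvar{z})]$, which is law-invariant.

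First, I will make the policy correspondence precise. Write $\Phi : \Pi_\text{NM} \rightarrow \Pi_\text{S}$ for the map sending $\pi$ to $\pi_\text{O}$, defined by $\pi_\text{O}(\cdot \mid \{s,\vec{o}\}) = \pi(\cdot \mid h_l)$, where $h_l$ is the unique history associated with $\{s,\vec{o}\}$ by \Cref{lemma:one_to_one_mapping_histories_states}. This map is a bijection, since the inverse is defined in the same way through the inverse state-to-history map.

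A small technicality needs care here. Occupancy-MDP states that are not reachable (not in the image of the history map) are irrelevant to the distribution of $\rvar{J}_\text{O}^{\pi_\text{O}}$. So I will either restrict $\O$ to reachable occupancies, as the paper's definition of $\O$ implicitly does, or treat $\Phi$ as surjective onto equivalence classes of policies that agree on reachable states. Either way, for every $\pi'_\text{O} \in \Pi_\text{S}$ there is some $\pi' \in \Pi_\text{NM}$ with $\rvar{J}_\text{O}^{\pi'_\text{O}} \overset{D}{=} \rvar{J}_\text{O}^{\Phi(\pi')}$, and conversely.

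Second, for any $\pi$, \Cref{lemma:equivalence_in_distribution} gives $\rvar{f}_H^\pi \overset{D}{=} \rvar{J}_\text{O}^{\Phi(\pi)}$. By law invariance, $\rho(\rvar{f}_H^\pi) = \rho(\rvar{J}_\text{O}^{\Phi(\pi)})$.

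Third, I will take minima over both policy classes. Since $\Phi$ is surjective (up to the equivalence above), the set of values $\{\rho(\rvar{f}_H^{\pi'}) : \pi' \in \Pi_\text{NM}\}$ coincides with $\{\rho(\rvar{J}_\text{O}^{\pi'_\text{O}}) : \pi'_\text{O} \in \Pi_\text{S}\}$. Hence the two minima are equal, or the two infima are equal if attainment is in question. Attainment is in fact not an issue: $\Pi_\text{S}$ for the finite-horizon, finite-state $\M_\text{O}$ is compact, and the ERM is continuous in the finitely many trajectory probabilities.

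Subtracting the two equalities gives the identity $\mathrm{OptGap}_H(\pi) = \rho(\rvar{J}_\text{O}^{\pi_\text{O}}) - \min_{\pi'_\text{O}} \rho(\rvar{J}_\text{O}^{\pi'_\text{O}})$. Both remaining claims follow immediately from this identity. An $\epsilon$-optimal $\pi_\text{O}$ yields $\mathrm{OptGap}_H(\Phi^{-1}(\pi_\text{O})) \le \epsilon$, and the case $\epsilon = 0$ applied to $\pi^*_\text{O}$ gives $\mathrm{OptGap}_H(\pi) = 0$.

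The main obstacle is the second step: it requires $\rho$ to be law-invariant. For a general $\rho : \L \rightarrow \mathbb{R}$ this must be assumed, because the two random variables live on different probability spaces. For the $\text{ERM}_\beta$ law invariance is immediate. I would state explicitly that the result holds for any law-invariant risk measure. Beyond that, I expect only bookkeeping, namely making sure that the policy bijection respects the unreachable-state caveat.
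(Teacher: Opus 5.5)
Your proposal is correct and follows the paper's proof. The equivalence-in-distribution lemma gives $\rho(\rvar{f}_H^\pi) = \rho(\rvar{J}_\text{O}^{\pi_\text{O}})$ for each $\pi$, the policy bijection from the history--state correspondence makes the two minima coincide, and subtracting yields the stated identity for $\mathrm{OptGap}_H$, from which both remaining claims follow. Your additions simply make explicit what the paper's short argument leaves implicit: the law-invariance of $\rho$ (needed because the two random variables live on different probability spaces), the handling of unreachable occupancy states, and attainment of the minimum.
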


The result above shows that it suffices to search for an (approximately) stationary optimal policy for $\M_\text{O}$, since such a policy corresponds to a non-Markovian policy that is (approximately) optimal for $\M_f$. In particular, such an (approximately) optimal policy for $\M_\text{O}$ can be seen as a non-Markovian policy for $\M_f$ that compresses the history up to any timestep into a running occupancy. The result above shows that \textit{it suffices to keep track of the running occupancy up to any timestep in order to attain optimal behavior} when solving risk-aware GUMDPs.

In light of~\Cref{theo:gumdp_equivalence_occupancy_MDP}, we consider risk-aware planning algorithms to solve the occupancy MDP. Unfortunately, solving the risk-aware occupancy MDP poses some challenges. One of the key challenges is due to the fact that the size of the state space of the occupancy MDP grows combinatorially with $H$ since every state in the occupancy MDP is associated with a possible history in $\M_f$. Consequently, the state space of the occupancy MDP is typically very large, which precludes the use of offline planning methods. In the next section, we therefore turn to online planning approaches to implicitly compute approximately risk-aware optimal policies for the occupancy MDP.

\subsection{Solving risk-aware occupancy MDPs via online planning} \label{sec:on-planning}
We aim to propose a practical algorithm to compute an approximately optimal risk-aware policy for the occupancy MDP $\M_O$, as introduced in~\Cref{sec:occupancy_mdp}. Since the occupancy MDP is a standard undiscounted finite-horizon MDP, we can resort to the risk-aware Monte Carlo tree search (MCTS) algorithm put forth by \citet{santos2026entropicriskawaremontecarlo}, \textsc{ERM-MCTS}, to solve the risk-aware occupancy MDP with the ERM objective. \textsc{ERM-MCTS} works in a similar fashion to standard MCTS in the sense that the algorithm iteratively builds a search tree that alternates between decision nodes, where actions are selected, and chance nodes corresponding to random next states sampled from the MDP. At each iteration, \textsc{ERM-MCTS} refines the search tree by simulating a random trajectory in the occupancy MDP. Action-selection at each decision node $\rvar{s}_t \in \S_{O}$ is given by
\begin{equation*}
    \rvar{a}_t \in \argmin_{a \in \A} \Bigg\{ \frac{1}{\beta} \ln\left(\frac{1}{N(\rvar{s}_t,a)} \sum_{i=1}^{N(\rvar{s}_t,a)} \exp \left(\beta \rvar{x}_i^{(\rvar{s}_t,a)} \right)\right) - \theta_t \sqrt{\frac{\sqrt{N(\rvar{s}_t)}}{N(\rvar{s}_t,a)}}\Bigg\},
\end{equation*}
where $N(\rvar{s}_t)$ is the number of times $\rvar{s}_t$ has been visited, $N(\rvar{s}_t,a)$ the number of times action $a$ has been selected while in $\rvar{s}_t$, $\rvar{x}_i^{(\rvar{s}_t,a)}$ is the $i$-th sampled terminal cost starting from $(\rvar{s}_t,a)$ (given that the occupancy MDP only has non-zero costs at $t=H$), and $\theta_t$ is an exploration constant. In case $N(\rvar{s}_t,a) = 0$ for some $a \in \A$, then $a$ is selected. We refer to~\Cref{appendix:ERM-MCTS-pseudocode} for the full pseudocode of the \textsc{ERM-MCTS} algorithm. Without loss of generality, assume we fix an initial state $s_0 \in \S$. We state the following result, which is a consequence of Theo.~6 in \citet{santos2026entropicriskawaremontecarlo}.

\begin{theorem}
    There exist exploration constants $(\theta_0, \theta_1, \ldots, \theta_{H-1})$ such that, from any initial state $s_0 \in \S$, \textsc{ERM-MCTS} provably solves the risk-aware occupancy MDP with an ERM objective. More precisely, let $\hat{V}_n(s_0) = \frac{1}{\beta} \ln \left(\frac{1}{n} \sum_{a \in \A} \sum_{i=1}^{T_a(n)} \exp\left(\beta \rvar{x}_i^{(\rvar{s}_t,a)}\right)\right)$ be the empirical ERM obtained by the \textsc{ERM-MCTS} algorithm after $n$ iterations at initial state $s_0$, where $T_a(n)$ denotes the number of times action $a$ was selected at root node $s_0$ after $n$ iterations. Then,
    $$\lim_{n \rightarrow \infty} \mathbb{E} \left[ \hat{V}_n(s_0) \right] \overset{(a)}{=} \min_{\pi'_\text{O} \in \Pi_\text{S}} \rho\left(\rvar{J}_\text{O}^{\pi'_\text{O}}\right) \overset{(b)}{=} \min_{\pi' \in \Pi_\text{NM}} \left\{ \rho(\rvar{f}^{\pi'}_H) \right\},$$
    where: (a) follows from Theo.~6 in \citet{santos2026entropicriskawaremontecarlo} as the expected empirical ERM obtained by \textsc{ERM-MCTS} converges, in the limit, to the optimal ERM for the occupancy MDP; and (b) follows from Theo.~\ref{theo:gumdp_equivalence_occupancy_MDP}.
\end{theorem}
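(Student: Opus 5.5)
The plan is to chain two results: an external convergence guarantee for \textsc{ERM-MCTS} gives equality (a), and \Cref{theo:gumdp_equivalence_occupancy_MDP} gives equality (b).

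\emph{Checking the hypotheses for (a).} I would verify that $\M_\text{O}$ satisfies the assumptions of Theo.~6 in \citet{santos2026entropicriskawaremontecarlo}. This means checking four properties of the occupancy MDP.
\begin{enumerate*}[label=(\roman*)]
    \item It is a finite-horizon, undiscounted MDP with horizon $H$.
    \item It has a finite action set $\A$.
    \item The states reachable from a fixed $s_0$ form a finite tree. By \Cref{lemma:one_to_one_mapping_histories_states} they are in bijection with histories of length at most $H-1$, of which there are finitely many.
    \item Costs are bounded. They vanish for $t<H$, and at $t=H$ the cost is $f(\frac{1-\gamma}{1-\gamma^H}\vec{o}_H)\in[-R,R]$ by \Cref{assumption:bounded_objective}, since $\frac{1-\gamma}{1-\gamma^H}\vec{o}_H\in\Delta(\S\times\A)$.
\end{enumerate*}
Boundedness makes $\exp(\beta \rvar{x})$ bounded in $[e^{-\beta R},e^{\beta R}]$. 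This is what the concentration and non-stationary bandit arguments underlying Theo.~6 require. The constants $\theta_t$ are then chosen per depth as Theo.~6 prescribes, as functions of $R$, $\beta$ and $H-t$.

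\emph{Equality (a).} With the hypotheses verified, Theo.~6 gives $\lim_{n}\mathbb{E}[\hat V_n(s_0)]=\min_{\pi_\text{O}}\rho(\rvar{J}_\text{O}^{\pi_\text{O}})$. In the source, the minimum may be stated over history-dependent or Markovian policies of the tree MDP. Here every node of the search tree corresponds to a distinct state of $\M_\text{O}$, again by \Cref{lemma:one_to_one_mapping_histories_states}. So these classes coincide with $\Pi_\text{S}$ on $\M_\text{O}$, and the minimum can be taken over $\Pi_\text{S}$.

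\emph{Equality (b).} I would apply \Cref{theo:gumdp_equivalence_occupancy_MDP} together with \Cref{lemma:equivalence_in_distribution}. The map $\pi\mapsto\pi_\text{O}$ is a bijection between $\Pi_\text{NM}$ (restricted to histories of length up to $H-1$) and $\Pi_\text{S}$. Under this map $\rvar{f}^\pi_H\overset{D}{=}\rvar{J}^{\pi_\text{O}}_\text{O}$. The ERM depends only on the law of its argument, so $\rho(\rvar{f}^\pi_H)=\rho(\rvar{J}^{\pi_\text{O}}_\text{O})$ for every pair. The two minima are therefore taken over the same set of values and coincide. Since all these objects are defined from a fixed $s_0$, I would set $p_0=\delta_{s_0}$ throughout.

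\emph{Main obstacle.} The step most likely to cause trouble is checking the technical conditions of Theo.~6, in particular how the rollout or leaf estimation is handled and how the depth-dependent $\theta_t$ are set. I would handle both by noting that the tree has depth exactly $H$ and that terminal costs are observed exactly at depth $H$, so no heuristic leaf value is needed. I expect existence of suitable $\theta_t$ then to follow directly from the inductive construction in that theorem.
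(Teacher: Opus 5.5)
Your proposal is correct and follows the same route as the paper. Equality (a) comes from invoking Theo.~6 of \citet{santos2026entropicriskawaremontecarlo} on the occupancy MDP, and equality (b) comes from Theo.~\ref{theo:gumdp_equivalence_occupancy_MDP}, via Lemma~\ref{lemma:equivalence_in_distribution} and the law-invariance of the ERM. Your explicit hypothesis checks — bounded terminal cost via Assumption~\ref{assumption:bounded_objective}, finiteness of the search tree, and coincidence of the policy classes on $\M_\text{O}$ — are details the paper leaves implicit and do not change the argument.
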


\section{Experimental Results}
\label{sec:experimental_results}
We empirically assess the performance of \textsc{ERM-MCTS} for solving risk-aware GUMDPs, investigating how \textsc{ERM-MCTS} trades off risk-neutral and risk-averse behavior. We describe our experimental methodology and refer to~\Cref{appendix:experimental-results} for a complete description of our experiments (environments, baselines, hyperparameters, etc.).

We consider four tasks:
\begin{enumerate*}[label=(\roman*)]
    \item cost minimization (MDP);
    \item maximum state entropy exploration (MSEE);
    \item imitation learning (IL); and
    \item multi-objective (MO) MDPs with different utility functions.
\end{enumerate*}
The definition of the objective function for each task is given in~\cref{sec:background:gumpds}. We consider two sets of environments: (i) illustrative environments that consist of low-dimensional GUMDPs; and (ii) high-dimensional GUMDPs consisting of grid-based environments showcasing different sources of stochasticity in order to encourage distinct types of policy behaviors. To our knowledge, we are the first work to propose an algorithm to solve risk-aware GUMDPs and, hence, there are no baselines that we can use to compare the performance of our method against. However, for the particular case of linear $f$ (MDP), we use an oracle baseline \textsc{ERM-BI} to validate the performance of our \textsc{ERM-MCTS} algorithm. \textsc{ERM-BI} computes the optimal risk-aware policy for the underlying occupancy MDP using a dynamic programming approach by exploiting the dynamic decomposition of the $\text{ERM}_\beta$ put forth by \cite{hau_2023}. All our plots are computed by aggregating the experimental results of, at least, 100 independent runs, and we refer to~\Cref{tab:appendix:illustrative-gumdps-hyperparams,tab:appendix:grid-gumdps-hyperparams} for the detailed list of our experimental hyperparameters across all environments.

\subsection{Illustrative environments} \label{sec:eval-ie}
We display in~\cref{fig:illustrative_envs} our illustrative environments. The MDP (\cref{fig:illustrative_envs}~(a)) consists of a four-state MDP where the agent needs to tradeoff at the initial state ($s_0$) between: (i) a risky action ($a_0$) that can lead to a low-cost state ($s_2$), but with some probability the agent ends in a high-cost state ($s_3$); and (ii) a safe action ($a_1$) that deterministically leads to a medium-cost state ($s_1$). The agent resets to the initial state with $10\%$ probability while not in the initial state. For the IL task (\cref{fig:illustrative_envs}~(a)), the agent aims to imitate the empirical occupancy induced by the trajectory of an agent that selected twice the risky action ($a_0$) and then selected the safe action ($a_1$) for all the remainder timesteps. The trajectory to imitate had a rather ``lucky'' outcome, as it never ended up in the absorbing state ($s_3$). Hence, the agent needs to trade off between imitating the behavior policy in states ($s_0$, $s_1$, $s_2$) and risking being absorbed into $s_3$, or only imitating the behavior policy in the ``less risky'' states ($s_0$, $s_1$). The MSEE task (\cref{fig:illustrative_envs}~(b)) is similar to the motivating example from \cref{sec:intro}, where the agent aims to explore an environment as uniformly as possible, but there is a chance that the agent transitions to an absorbing state ($s_4$). Finally, the MO-MDP (\cref{fig:illustrative_envs}~(c)) is inspired by the FishWood environment \citep{roijers_2020}, where the agent needs to tradeoff between two cost functions. Each cost function penalizes different behaviors. We consider three utility functions \citep{hayes2022montecarlotreesearch}: (i) a weighted combination of the discounted cumulative costs; (ii) the maximum of the discounted cumulative costs; and (iii) the minimum of the discounted cumulative costs.

\begin{figure*}[t]
    \centering
    \begin{minipage}{0.26\textwidth}
        \begin{subfigure}[b]{\textwidth}
            \centering
            \includegraphics[width=0.85\columnwidth]{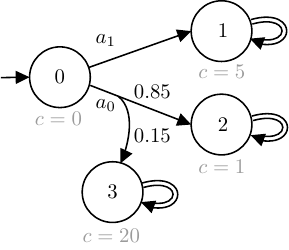}
            \caption{MDP and IL (10\% reset)}
        \end{subfigure}
    \end{minipage}
    \hfill
    \begin{minipage}{0.45\textwidth}
        \begin{subfigure}[b]{\textwidth}
            \centering
            \includegraphics[width=0.99\columnwidth]{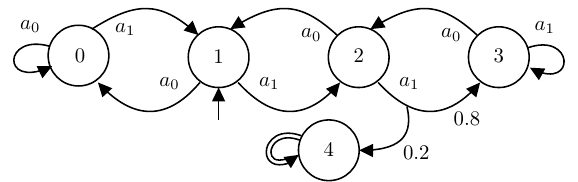}
            \caption{MSEE}
        \end{subfigure}
    \end{minipage}
    \hfill
    \begin{minipage}{0.27\textwidth}
        \begin{subfigure}[b]{\textwidth}
            \centering
            \includegraphics[width=0.99\columnwidth]{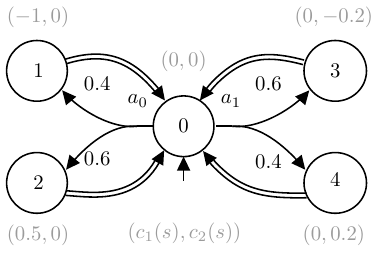}
            \caption{MO}
        \end{subfigure}
    \end{minipage}
    \caption{Illustrative environments.}
    \label{fig:illustrative_envs}
\end{figure*}

\begin{figure}[t]
    \begin{center}
        \includegraphics[width=0.85\linewidth]{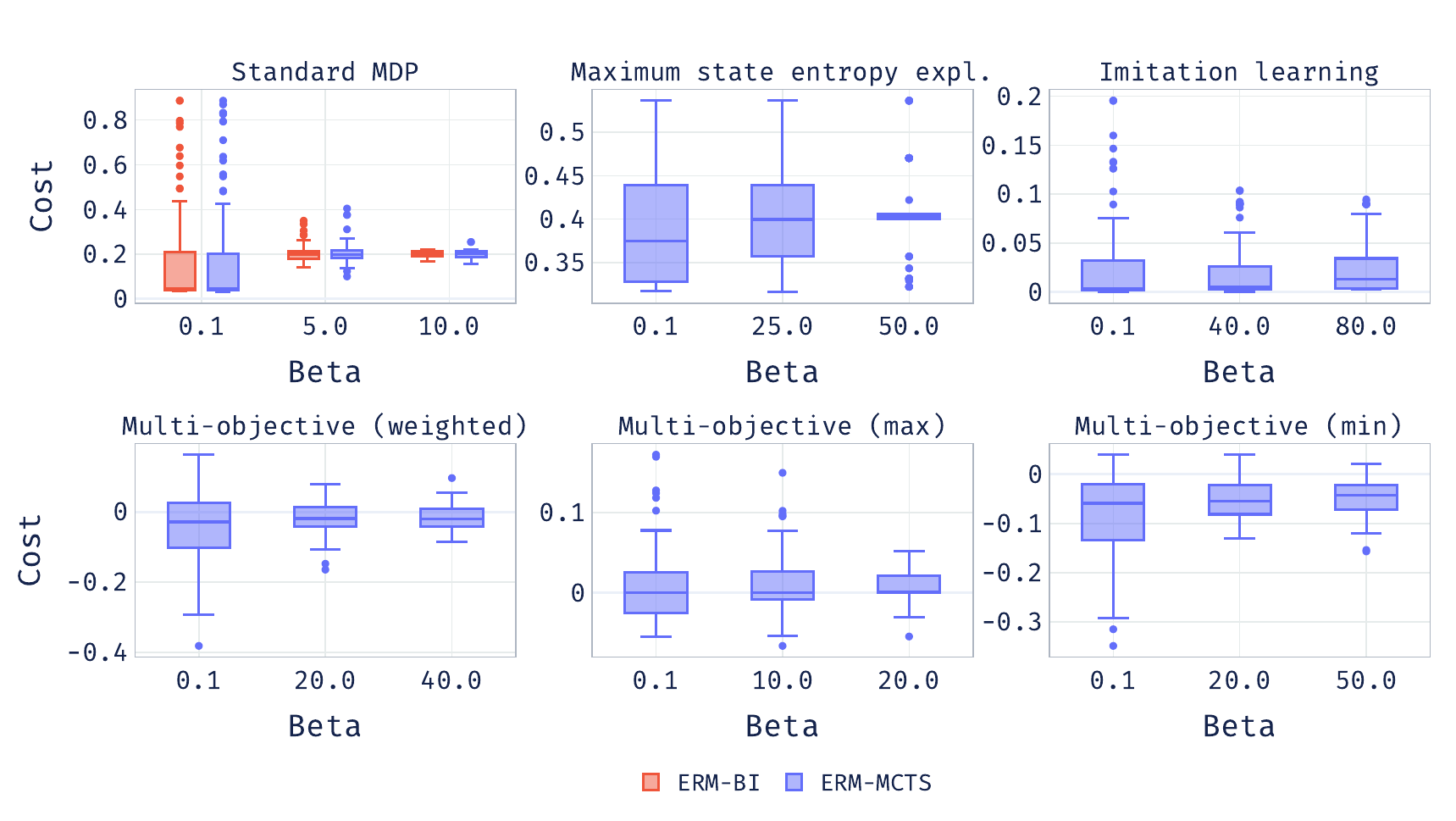}
    \end{center}
    \caption{Box plots of the costs obtained under the illustrative environments. Lower is better.}
    \label{fig:illustrative_envs_results}
\end{figure}

In ~\cref{fig:illustrative_envs_results}, we display the box plots computed for the empirical distributions of costs obtained by \textsc{ERM-MCTS} under different tasks and $\beta$ values. To validate our algorithm, we include a cost-minimization task corresponding to a standard MDP. As seen, for the MDP, the box plot obtained by \textsc{ERM-MCTS} closely matches that obtained by the \textsc{ERM-BI} oracle baseline across the tested $\beta$ values, providing empirical validation of \textsc{ERM-MCTS}. Furthermore, across all tasks, \textsc{ERM-MCTS} successfully trades off risk-neutral and risk-averse behavior as a function of $\beta$. As $\beta$ increases and \textsc{ERM-MCTS} progressively focuses on optimizing worst-case outcomes, upper whiskers and outliers (i.e., the worst outcomes) of the box plots decrease. This indicates that worst-case outcomes become less probable and that \textsc{ERM-MCTS} effectively computes increasingly risk-averse policies. Naturally, optimizing for worst-case outcomes typically degrades best-case or expected outcomes, as reflected by the increase in median values and in the lower whiskers and outliers as $\beta$ increases.

\subsection{Grid environments} \label{sec:eval-ge}
\subsubsection{Maximum State Entropy Exploration} \label{sec:grid_msee}
The MSEE task integrates a \(10\times10\) grid that comprises two different types of squares:
\begin{enumerate*}[label=(\roman*)]
    \item normal terrain; and
    \item difficult terrain.
\end{enumerate*}
The agent can move in all four directions, ending up in the corresponding adjacent square (if moving out-of-bounds, the agent remains in the same square). When the agent is inside a difficult terrain square, there is a \(p_\text{stuck}\) probability that the agent gets stuck, which translates in canceling the selected (moving) action and thus remaining trapped in the current square. Upon getting stuck, there is an additional \(p_\text{unstuck}\) probability that untraps the agent allowing it to move freely again. As expected, inside normal terrain, the agent's selected moving action is always guaranteed to occur. Due to the environment's stochastic nature, the agent needs to consider if there are benefits in exploring certain difficult terrain squares at the expense of getting stuck, which can have a negative effect on the discounted costs received from that point onward. 

In \Cref{fig:grid_envs_msee}, we report the box plots of the empirical costs obtained by \textsc{ERM-MCTS} with \(\beta\in\{0.001,1,1000\}\). When \(\beta\in\{0.001,1\}\), \textsc{ERM-MCTS} outputs similar risk-seeking policies, as both display identical Inner Fence Interval (IFI) ranges, inside \([0.1, 0.2]\). Additionally, the expectation value is lower when \(\beta=0.001\), having a value of \(0.167\), whereas the expectation for \(\beta=1\) is \(0.177\). Furthermore, both policies exhibit a wide range of outliers on the upper-half of the distribution, implying that in some runs the agent got trapped for long periods of time when covering difficult terrain squares, further alluding to the potential disadvantages of having a risk-seeking behavior. Additionally, the worst outlier is higher for the policy with \(\beta=0.001\), which is expected since this policy is effectively closer to the risk-seeking (or risk-neutral) boundary in the \emph{risk-awareness} spectrum. When comparing the policy obtained when \(\beta=1000\) to the the previous two (\(\beta\in\{0.001,1\}\)), the range of the objective distribution becomes more concentrated as there are fewer outliers. This shows the effectiveness of the risk-averse behavior in avoiding the worst outcomes, as the highest cost is only \(0.48\) when setting \(\beta=1000\), vs. \(0.93\) using \(\beta=0.001\). The drawback of this policy manifests by inducing an average case that performs worse than policies that are more risk-seeking, where the average of the objective distributions are \(0.30\) and \(0.167\), for the policies using \(\beta=1000\) and \(\beta=0.001\), respectively. Finally, we further validate our analysis with \Cref{fig:grid_envs_msee_heatmap}, which exposes the average number of timesteps spent in each square over all runs. We observe that when \(\beta=1000\), the policy navigates towards the lower-right corner to avoid difficult terrain squares altogether. As \(\beta\) decreases, the heatmap becomes more uniformly distributed showcasing that the policy tries to visit all squares (normal and difficult terrain) more often.

\begin{figure}[t]
\centering
\begin{subfigure}{.32\textwidth}
  \centering
  \includegraphics[width=\linewidth]{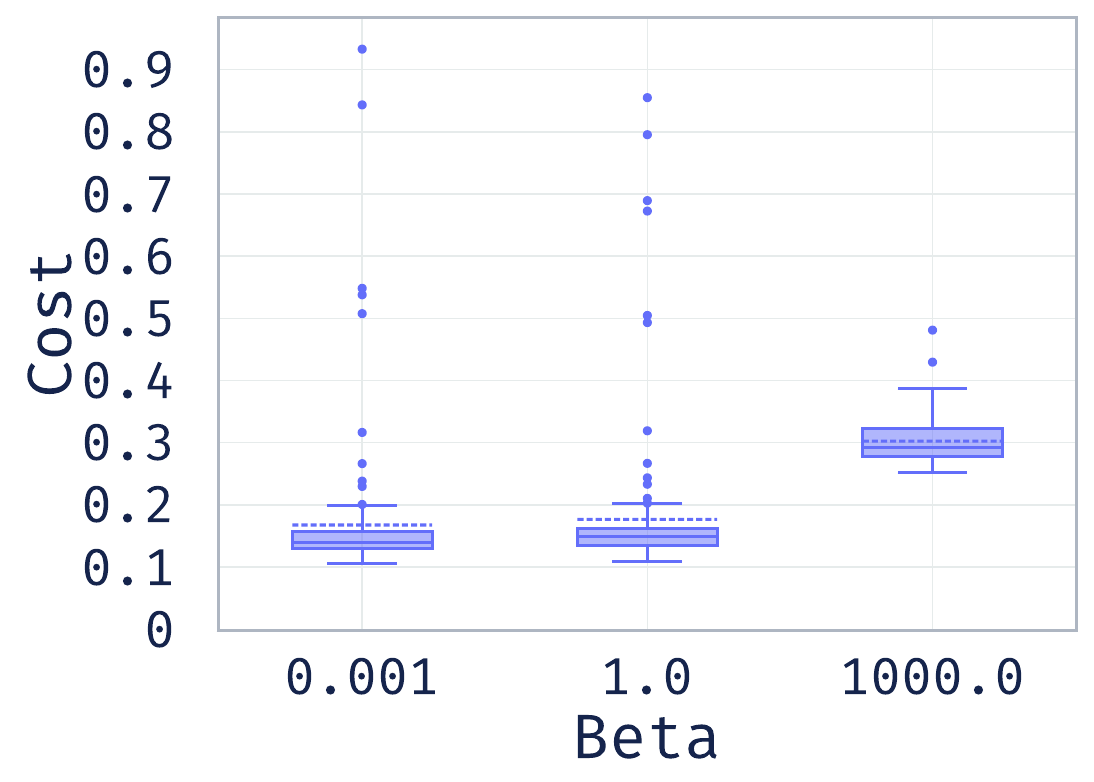}
  \caption{MSEE}
  \label{fig:grid_envs_msee}
\end{subfigure}%
\hfill
\begin{subfigure}{.32\textwidth}
  \centering
  \includegraphics[width=\linewidth]{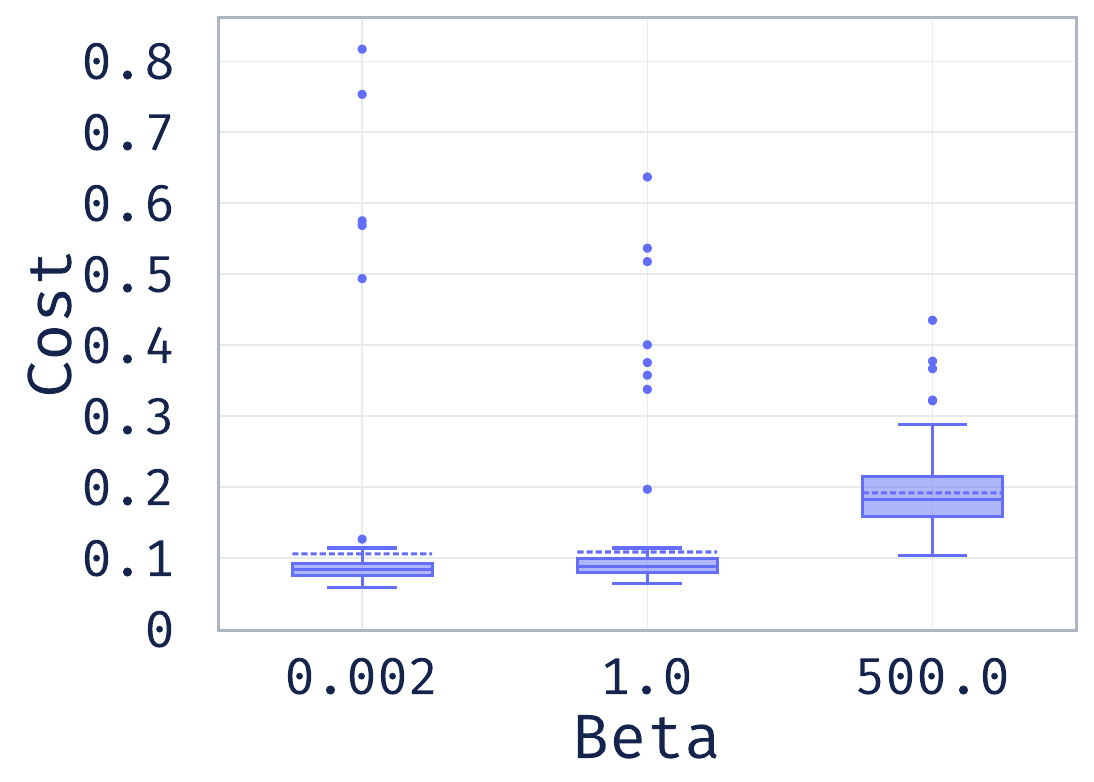}
  \caption{IL}
  \label{fig:grid_envs_il}
\end{subfigure}%
\hfill
\begin{subfigure}{.32\textwidth}
  \centering
  \includegraphics[width=\linewidth]{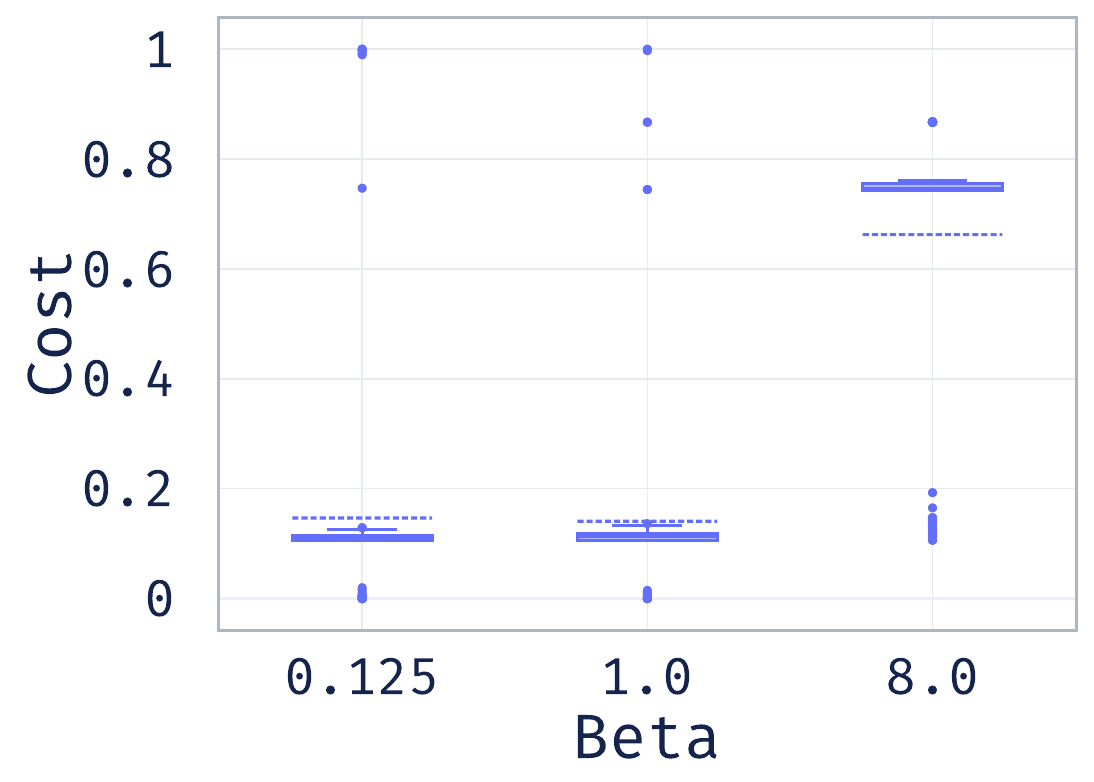}
  \caption{MO}
  \label{fig:grid_envs_mo}
\end{subfigure}%
\caption{Grid environments: box plots obtained, for different tasks, by deploying \textsc{ERM-MCTS} with different \(\beta\) values. Box plots computed for 128 independent runs. Lower is better.}
\end{figure}

\begin{figure}[t]
    \begin{center}
        \includegraphics[width=0.85\linewidth,trim={0 26ex 0 31ex},clip]{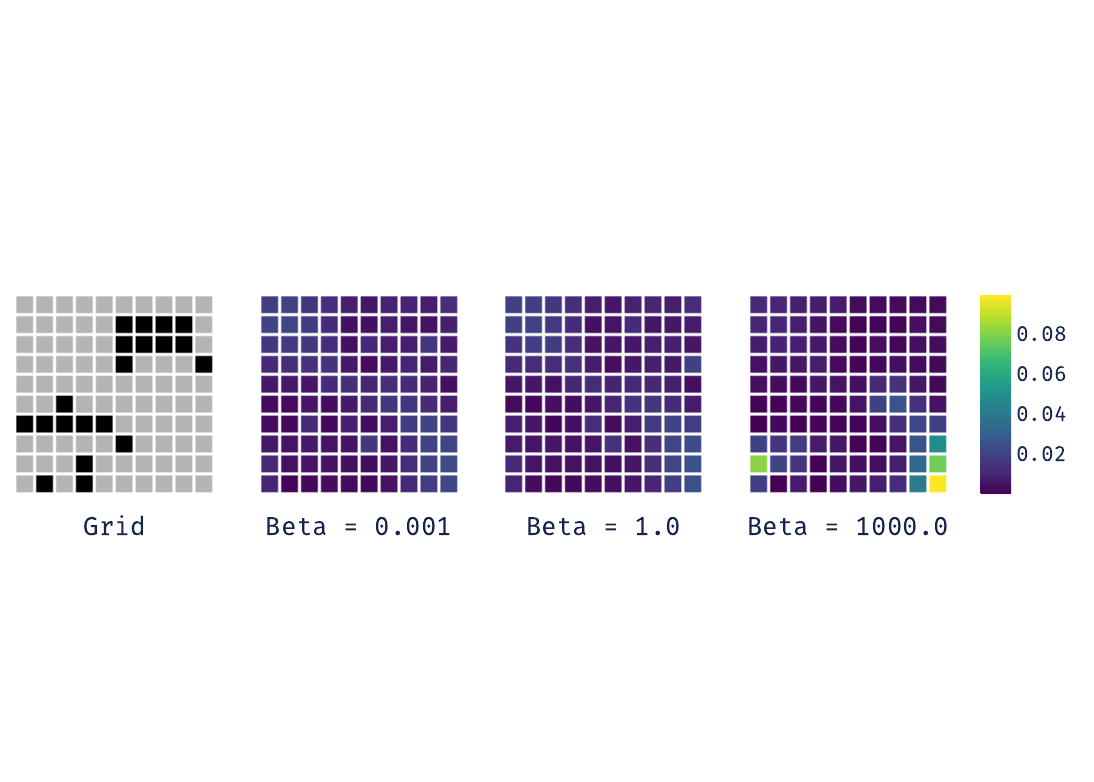}
    \end{center}
    \caption{MSEE (grid environment): leftmost grid illustrates the environment, where normal and difficult terrain are depicted in gray and black squares, respectively; the following grids showcase the average time spent in each square averaged over all (128) runs, for each \(\beta\in\{0.001,1,1000\}\).}
    \label{fig:grid_envs_msee_heatmap}
\end{figure}

\subsubsection{Imitation Learning} \label{sec:grid_il}
We reuse the MSEE environment for our IL task. Additionally, we carefully design a behavior policy that changes from risk-seeking and risk-aware behaviors at well-defined intervals. In high-level terms, the behavior policy starts by exploring a zone containing both types of squares, followed by a period where only normal squares are visited. This procedure repeats a second time to make sure all grid squares are visited once. Furthermore, the behavior policy will not be subjected to the environment's stochasticity, i.e., it will never get trapped when visiting difficult terrain. In this manner, we guarantee that achieving a complete match will be extremely unlikely, further leading the \textsc{ERM-MCTS} policy to consider when and what risks to take.

To analyze the IL task, we compare \textsc{ERM-MCTS} with three different beta values, \(\beta\in\{0.02,1,500\}\). As shown in~\Cref{fig:grid_envs_il}, we clearly observe the same pattern reported in the MSEE task (\Cref{sec:grid_msee}): as \(\beta\) increases, the maximum cost obtained decreases while the expectation and median increase. Here, the expectation obtains the values \(0.106\), \(0.109\), and \(0.192\), while the median acquires \(0.08\), \(0.09\), and \(0.18\) for the policies with \(\beta=0.02\), \(\beta=1\), and \(\beta=500\), respectively. Conversely, and for the same policies, the uppermost outliers are \(0.81\), \(0.65\), and \(0.44\), respectively. To get a closer look at how behaviors differ as \(\beta\) changes, we plot, in~\Cref{fig:grid_envs_il_trap}, the average number of runs visiting difficult terrain squares in each timestep. Interestingly, the policy with the highest \(\beta\) emphasizes matching the occupancy of normal squares during the first 40 time steps, as there is a negligible number of runs stepping inside difficult terrain. For the remaining time, the policy focuses more on equalizing the occupancy at difficult terrain squares, as outlined by the increased number of runs exploring such squares. We argue that such behavior is due to discounting, since it is more forgiving to get trapped closer to the end of the episode, considering that it has less impact on the discounted return. Contrastingly, the other policies (\(\beta\in\{0.02,1\}\)) start visiting difficult terrain squares from the beginning of the episode. Consequently, the likelihood of getting trapped sooner and for longer periods of time increases, as hinted in~\Cref{fig:grid_envs_il} by the wide range of outliers on the upper half of the cost distribution.

\subsubsection{Multi-Objective} \label{sec:grid-mo}
For the MO task, we extend the resource-gathering environment~\citep{mo-rg}, aiming at introducing challenging probabilistic dynamics that allow the modulation of policies demonstrating behaviors with different degrees of risk-awareness. In the modified environment, the agent's objective consists in retrieving two different resources (\(R_1\) and \(R_2\)) back to its starting location while avoiding enemies present at specific squares. Additionally, we set a slippery dynamic, where with probability \(p_\text{slip}\) the agent will move perpendicularly to the selected direction. All agent objectives are modulated using the following reward functions:
\begin{enumerate*}[label=(\roman*)]
    \item agent receives a cost when failing to deliver \(R_1\) to the home location (the agent's initial position);
    \item agent receives a cost when failing to delivering \(R_2\) to the home location;
    \item agent receives a cost when getting defeated by an enemy, which happens, with probability \(p_\text{defeat}\), when visiting a square containing an enemy.
\end{enumerate*}
Finally, we combine the three independent reward functions using a non-linear mapping that prioritizes collecting \(R_1\) over the other ones. We refer to \Cref{app:mo-grid} for a complete description of the dynamics, cost functions, and hyperparameters used.


We simulate \textsc{ERM-MCTS} with \(\beta\in\{0.125,1,8\}\). As shown in \Cref{fig:grid_envs_mo}, the cost distributions obtained by the policies where \(\beta\in\{0.125,1\}\) are highly identical. We again observe the risk-seeking behavior of these policies, since, on average, the costs obtained remain concentrated near the lower bound. For instance, the expectation is \(0.16\) and \(0.18\) for \(\beta=0.125\) and \(\beta=1\), respectively. The IFI is \([0.10,0.14]\) in both cases. Furthermore, since during the optimization of the ERM's objective, the expectation outweighs the upper tail of the distribution, eventually some runs conclude at the worst possible outcome, in which the agent gets defeated by an enemy. Contrastingly, the policy generated for \(\beta=8\) clearly eliminates this possible outcome. In this case, the agent never gets the cost of \(1\) by inducing a risk-aware behavior that most of the time prefers not to collect \(R_1\) due to its proximity to enemies and, as such, the likelihood of defeat. Therefore, the median centers around \(0.87\), indicating the agent never retrieved any resources to its initial position. Finally, in \Cref{fig:grid_envs_mo_outcome}, we show in more detail the frequency of outcomes that each policy achieves. As hinted previously, when \(\beta\in\{0.125,1\}\) we observe a small fraction of runs where the agent gets defeated. On the other hand, the risk-averse policy (\(\beta=8\)) disallows this outcome entirely by, most of the time (82\%), either avoiding gathering any resources or just retrieving \(R_2\).

\begin{figure}[t]
\centering
\begin{minipage}{.47\textwidth}
  \centering
  \includegraphics[width=\linewidth]{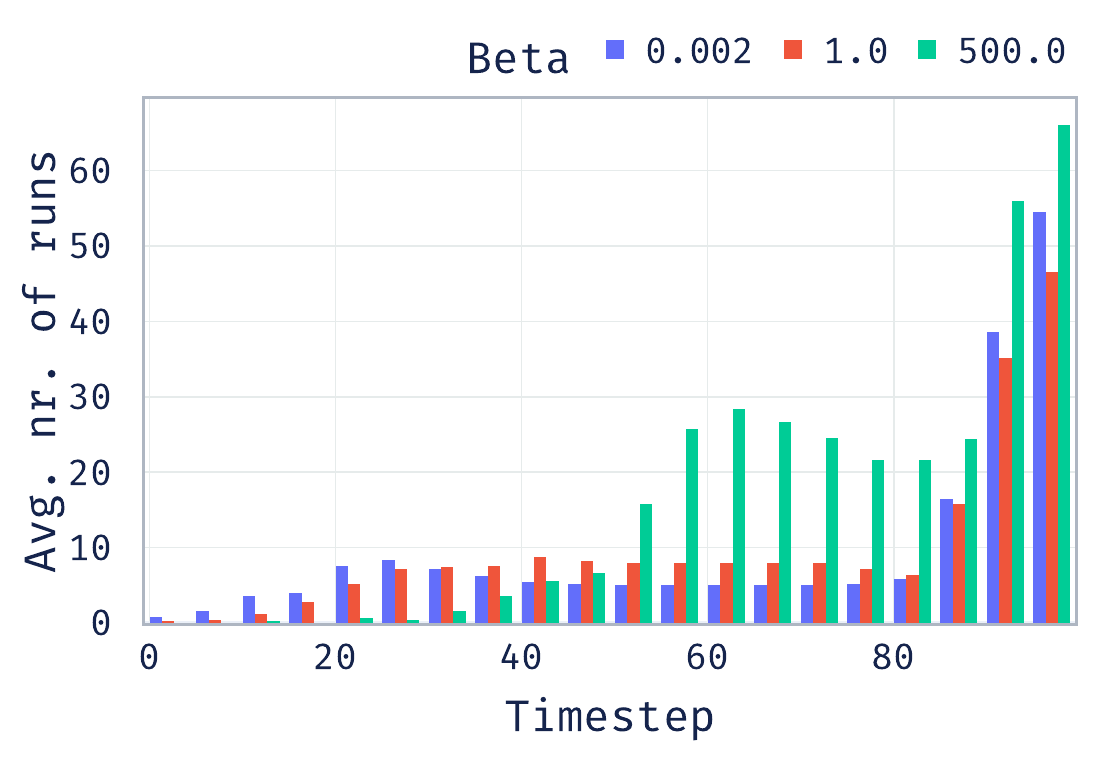}
  \captionof{figure}{IL (grid environment): Average number of runs visiting difficult terrain squares in each timestep for different \(\beta\) values.}
  \label{fig:grid_envs_il_trap}
\end{minipage}%
\hfill
\begin{minipage}{.47\textwidth}
  \centering
  \includegraphics[width=\linewidth]{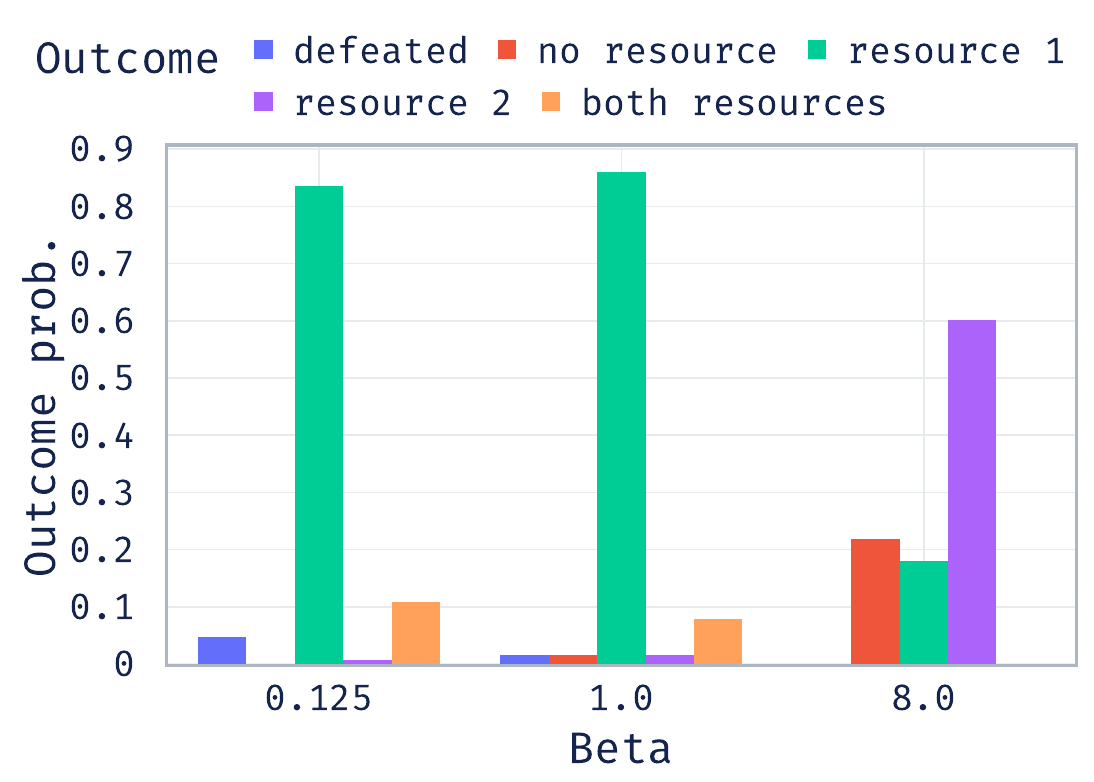}
  \captionof{figure}{MO (grid environment): Probability of each outcome for different \(\beta\) values.}
  \label{fig:grid_envs_mo_outcome}
\end{minipage}
\end{figure}

\section{Conclusion}
We motivate, propose and formalize risk-aware GUMDPs, which allow to take advantage of the flexibility of the GUMDPs framework with respect to objective specification, while trading off expected performance and risk-aversion. To solve risk-aware GUMDPs, we first explore a connection between risk-aware GUMDPs and solving a particular risk-aware MDP, named occupancy MDP, in which the agent keeps track of the empirical frequency of visitation of state-action pairs up to the current timestep. We propose a provably correct online planning approach based on an MCTS algorithm to solve the risk-aware occupancy MDP, effectively solving the original risk-aware GUMDP problem up to any desired accuracy. We provide a set of experimental results under a set of diverse tasks showing that our approach successfully trades off risk-neutral and risk-averse behavior.

Future work could investigate whether a similar approach to ours can be used to solve GUMDPs with CVaR objectives \citep{rockafellar2000optimization,NIPS2015_64223ccf}. Other interesting direction is to investigate whether our approach can be extended to deal with very large/inherently continuous state spaces, e.g., by borrowing ideas from successor features \citep{barreto_2017,borsa_2018}.

\subsubsection*{Acknowledgments}
\label{sec:ack}
This work was supported by Portuguese national funds through the Portuguese Fundação para a Ciência e a Tecnologia (FCT) under projects UID/50021/2025 and UID/PRR/50021/2025 (INESC-ID multi-annual funding), as well as AI-PackBot (project number 14935, LISBOA2030-FEDER-00854700). Pedro P. Santos acknowledges the FCT PhD grant 2021.04684.BD and Fábio Vital acknowledges the FCT PhD grant 2022.14163.BD. Alberto Sardinha acknowledges the CNPq Research Productivity Fellowship (PQ), with reference 312699/2025-5. The authors thank the lab managers at GAIPS for the support provided when running the computational experiments of this work. The authors also thank Jacopo Silvestrin for discussions on earlier versions of this work and Zita Marinho for feedback on the manuscript.


\bibliography{main}

@article{rockafellar2000optimization,
  title={Optimization of conditional value-at-risk},
  author={Rockafellar, R Tyrrell and Uryasev, Stanislav and others},
  journal={Journal of risk},
  volume={2},
  pages={21--42},
  year={2000},
  publisher={Citeseer}
}

@book{puterman2014markov,
  author = {Puterman, Martin L},
  publisher = {John Wiley \& Sons},
  title = {Markov decision processes: discrete stochastic dynamic programming},
  year = 2014
}

@book{lattimore_2020, title={Bandit Algorithms}, DOI={10.1017/9781108571401}, publisher={Cambridge University Press}, author={Lattimore, Tor and Szepesvári, Csaba}, year={2020}}

@article{zahavy_2021,
  author       = {Tom Zahavy and
                  Brendan O'Donoghue and
                  Guillaume Desjardins and
                  Satinder Singh},
  title        = {Reward is enough for convex MDPs},
  journal      = {CoRR},
  volume       = {abs/2106.00661},
  year         = {2021},
}

@InProceedings{hazan_2019,
  title = 	 {Provably Efficient Maximum Entropy Exploration},
  author =       {Hazan, Elad and Kakade, Sham and Singh, Karan and Van Soest, Abby},
  booktitle = 	 {Proceedings of the 36th International Conference on Machine Learning},
  pages = 	 {2681--2691},
  year = 	 {2019},
  volume = 	 {97},
  series = 	 {Proceedings of Machine Learning Research},
}

@article{mutti_2023,
  author  = {Mirco Mutti and Riccardo De Santi and Piersilvio De Bartolomeis and Marcello Restelli},
  title   = {Convex Reinforcement Learning in Finite Trials},
  journal = {Journal of Machine Learning Research},
  year    = {2023},
  volume  = {24},
  number  = {250},
  pages   = {1--42},
}

@article{dvoretzky_1952,
 author = {A. Dvoretzky and J. Kiefer and J. Wolfowitz},
 journal = {Econometrica},
 number = {3},
 pages = {450--466},
 publisher = {[Wiley, Econometric Society]},
 title = {The Inventory Problem: II. Case of Unknown Distributions of Demand},
 volume = {20},
 year = {1952}
}

@book{chow1971great,
  title={Great Expectations: The Theory of Optimal Stopping},
  author={Chow, Y.S. and Robbins, H. and Siegmund, D.},
  isbn={9780395053140},
  year={1971},
}

@article{stidham_1978,
 author = {Shaler Stidham},
 journal = {Management Science},
 number = {15},
 pages = {1598--1610},
 title = {Socially and Individually Optimal Control of Arrivals to a GI/M/1 Queue},
 volume = {24},
 year = {1978}
}

@book{sutton_1998,
author= {Richard S. Sutton and Andrew G. Barto},
  edition = {Second},
  publisher = {The MIT Press},
  title = {Reinforcement Learning: An Introduction},
  year = {2018 }
}

@article{mnih_2015,
  author    = {Mnih, Volodymyr and
               Kavukcuoglu, Koray and
               Silver, David and
               Graves, Alex and
               Antonoglou, Ioannis and
               Wierstra, Daan and
               Riedmiller, Martin},
  title     = {Playing Atari with Deep Reinforcement Learning},
  journal   = {Nature},
    year={2015},
    volume={518},
    number={7540},
    pages={529-533},
}

@article{silver_2017,
author={Silver, David
and Schrittwieser, Julian
and Simonyan, Karen
and Antonoglou, Ioannis
and Huang, Aja
and Guez, Arthur
and Hubert, Thomas
and Baker, Lucas
and Lai, Matthew
and Bolton, Adrian
and Chen, Yutian
and Lillicrap, Timothy
and Hui, Fan
and Sifre, Laurent
and van den Driessche, George
and Graepel, Thore
and Hassabis, Demis},
title={Mastering the game of Go without human knowledge},
journal={Nature},
year={2017},
volume={550},
number={7676},
pages={354-359},
}

@article{lillicrap_2016,
  title={Continuous control with deep reinforcement learning},
  author={T. Lillicrap and J. Hunt and A. Pritzel and N. Heess and T. Erez and Y. Tassa and D. Silver and Daan Wierstra},
  journal={CoRR},
  year={2016},
  volume={abs/1509.02971}
}

@misc{abel2022expressivity,
      title={On the Expressivity of Markov Reward}, 
      author={David Abel and Will Dabney and Anna Harutyunyan and Mark K. Ho and Michael L. Littman and Doina Precup and Satinder Singh},
      year={2022},
      eprint={2111.00876},
      archivePrefix={arXiv},
      primaryClass={cs.LG}
}

@article{hussein_2017,
author = {Hussein, Ahmed and Gaber, Mohamed Medhat and Elyan, Eyad and Jayne, Chrisina},
title = {Imitation Learning: A Survey of Learning Methods},
year = {2017},
volume = {50},
number = {2},
journal = {ACM Comput. Surv.},
month = {apr},
articleno = {21},
numpages = {35},
}

@article{Osa_2018,
   title={An Algorithmic Perspective on Imitation
                  Learning},
   volume={7},
   ISSN={1935-8261},
   number={1–2},
   journal={Foundations and Trends in Robotics},
   author={Osa, Takayuki and Pajarinen, Joni and Neumann, Gerhard and Bagnell, J. Andrew and Abbeel, Pieter and Peters, Jan},
   year={2018},
   pages={1–179} }

@article{garcia_2015,
  author  = {Javier Garc{{\'i}}a and Fern and o Fern{{\'a}}ndez},
  title   = {A Comprehensive Survey on Safe Reinforcement Learning},
  journal = {Journal of Machine Learning Research},
  year    = {2015},
  volume  = {16},
  number  = {42},
  pages   = {1437--1480},
}

@misc{eysenbach2018diversity,
      title={Diversity is All You Need: Learning Skills without a Reward Function}, 
      author={Benjamin Eysenbach and Abhishek Gupta and Julian Ibarz and Sergey Levine},
      year={2018},
      eprint={1802.06070},
      archivePrefix={arXiv},
      primaryClass={cs.AI}
}

@misc{achiam2018variational,
      title={Variational Option Discovery Algorithms}, 
      author={Joshua Achiam and Harrison Edwards and Dario Amodei and Pieter Abbeel},
      year={2018},
      eprint={1807.10299},
      archivePrefix={arXiv},
      primaryClass={cs.AI}
}

@book{altman_1999,
  author = {Altman, E.},
  publisher = {Chapman and Hall},
  title = {Constrained Markov Decision Processes},
  year = 1999
}

@article{efroni_2020,
  author       = {Yonathan Efroni and
                  Shie Mannor and
                  Matteo Pirotta},
  title        = {Exploration-Exploitation in Constrained MDPs},
  journal      = {CoRR},
  volume       = {abs/2003.02189},
  year         = {2020},
}

@misc{geist2022concave,
      title={Concave Utility Reinforcement Learning: the Mean-Field Game Viewpoint}, 
      author={Matthieu Geist and Julien Pérolat and Mathieu Laurière and Romuald Elie and Sarah Perrin and Olivier Bachem and Rémi Munos and Olivier Pietquin},
      year={2022},
      eprint={2106.03787},
      archivePrefix={arXiv},
      primaryClass={cs.LG}
}

@misc{santos_2024,
      title={The Number of Trials Matters in Infinite-Horizon General-Utility Markov Decision Processes}, 
      author={Pedro P. Santos and Alberto Sardinha and Francisco S. Melo},
      year={2024},
      eprint={2409.15128},
      archivePrefix={arXiv},
      primaryClass={cs.LG},
}

@misc{santos_2025,
      title={Solving General-Utility Markov Decision Processes in the Single-Trial Regime with Online Planning}, 
      author={Pedro P. Santos and Alberto Sardinha and Francisco S. Melo},
      year={2025},
      eprint={2505.15782},
      archivePrefix={arXiv},
      primaryClass={cs.LG},
}

@article{artzner_1999,
author = {Artzner, Philippe and Delbaen, Freddy and Jean-Marc, Eber and Heath, David},
year = {1999},
month = {07},
pages = {203 - 228},
title = {Coherent Measures of Risk},
volume = {9},
journal = {Mathematical Finance},
}

@inproceedings{abbeel_2004,
author = {Abbeel, Pieter and Ng, Andrew Y.},
title = {Apprenticeship learning via inverse reinforcement learning},
year = {2004},
booktitle = {Proceedings of the Twenty-First International Conference on Machine Learning},
pages = {1},
series = {ICML '04}
}

@inproceedings{NIPS2015_64223ccf,
 author = {Chow, Yinlam and Tamar, Aviv and Mannor, Shie and Pavone, Marco},
 booktitle = {Advances in Neural Information Processing Systems},
 pages = {},
 title = {Risk-Sensitive and Robust Decision-Making: a CVaR Optimization Approach},
 volume = {28},
 year = {2015}
}

@Article{howard_matheson_1972,
journal={Management Science},
author={Ronald A. Howard and James E. Matheson},
title={Risk-Sensitive Markov Decision Processes},
year={1972},
month={March},
pages={356-369},
volume={18},
number={7},
}

@InProceedings{hau_2023,
  title = 	 {Entropic Risk Optimization in Discounted {MDPs}},
  author =       {Lin Hau, Jia and Petrik, Marek and Ghavamzadeh, Mohammad},
  booktitle = 	 {Proceedings of The 26th International Conference on Artificial Intelligence and Statistics},
  pages = 	 {47--76},
  year = 	 {2023},
  volume = 	 {206},
  month = 	 {25--27 Apr},
  publisher =    {PMLR},
}

@book{follmer_2016,
  title={Stochastic Finance: An Introduction in Discrete Time},
  author={F{\"o}llmer, H. and Schied, A.},
  isbn={9783110463453},
  series={De Gruyter Textbook},
  year={2016},
}

@article{borkar_2002,
 author = {V. S. Borkar and S. P. Meyn},
 journal = {Mathematics of Operations Research},
 number = {1},
 pages = {192--209},
 title = {Risk-Sensitive Optimal Control for Markov Decision Processes with Monotone Cost},
 volume = {27},
 year = {2002}
}

@inproceedings{pagnoncelli_2022,
  TITLE = {{Multistage stochastic programs with the entropic risk measure}},
  AUTHOR = {Pagnoncelli, Bernardo and Dowson, Oscar and Morton, David},
  YEAR = {2022},
  MONTH = Feb,
}

@misc{marthe2025efficientrisksensitiveplanningentropic,
      title={Efficient Risk-sensitive Planning via Entropic Risk Measures}, 
      author={Alexandre Marthe and Samuel Bounan and Aurélien Garivier and Claire Vernade},
      year={2025},
      eprint={2502.20423},
      archivePrefix={arXiv},
}

@misc{mortensen2025entropicriskoptimizationdiscounted,
      title={Entropic Risk Optimization in Discounted MDPs: Sample Complexity Bounds with a Generative Model}, 
      author={Oliver Mortensen and Mohammad Sadegh Talebi},
      year={2025},
      eprint={2506.00286},
      archivePrefix={arXiv},
      primaryClass={cs.LG},
}

@article{bental_1986,
 author = {Aharon Ben-Tal and Marc Teboulle},
 journal = {Management Science},
 number = {11},
 pages = {1445--1466},
 title = {Expected Utility, Penalty Functions, and Duality in Stochastic Nonlinear Programming},
 volume = {32},
 year = {1986}
}

@misc{santos2026entropicriskawaremontecarlo,
      title={Entropic Risk-Aware Monte Carlo Tree Search}, 
      author={Pedro P. Santos and Jacopo Silvestrin and Alberto Sardinha and Francisco S. Melo},
      year={2026},
      eprint={2601.17667},
      archivePrefix={arXiv},
      primaryClass={cs.LG},
}

@article{radulescu_2019,
  author       = {Roxana Radulescu and
                  Patrick Mannion and
                  Diederik M. Roijers and
                  Ann Now{\'{e}}},
  title        = {Multi-Objective Multi-Agent Decision Making: {A} Utility-based Analysis
                  and Survey},
  journal      = {CoRR},
  volume       = {abs/1909.02964},
  year         = {2019},
}

@misc{hayes2022montecarlotreesearch,
      title={Monte Carlo Tree Search Algorithms for Risk-Aware and Multi-Objective Reinforcement Learning}, 
      author={Conor F. Hayes and Mathieu Reymond and Diederik M. Roijers and Enda Howley and Patrick Mannion},
      year={2022},
      eprint={2211.13032},
      archivePrefix={arXiv},
}

@conference{roijers_2020,
title = "Multi-objective reinforcement learning for the expected utility of the return",
author = "Roijers, \{Diederik M.\} and Denis Steckelmacher and Ann Now{\'e}",
year = "2020",
note = "2018 Adaptive Learning Agents, ALA 2018 - Co-located Workshop at the Federated AI Meeting, FAIM 2018"
}

@inproceedings{barreto_2017,
 author = {Barreto, Andre and Dabney, Will and Munos, Remi and Hunt, Jonathan J and Schaul, Tom and van Hasselt, Hado P and Silver, David},
 booktitle = {Advances in Neural Information Processing Systems},
 pages = {},
 title = {Successor Features for Transfer in Reinforcement Learning},
 volume = {30},
 year = {2017}
}

@article{borsa_2018,
  author       = {Diana Borsa and
                  Andr{\'{e}} Barreto and
                  John Quan and
                  Daniel J. Mankowitz and
                  R{\'{e}}mi Munos and
                  Hado van Hasselt and
                  David Silver and
                  Tom Schaul},
  title        = {Universal Successor Features Approximators},
  journal      = {CoRR},
  volume       = {abs/1812.07626},
  year         = {2018},
}

@inproceedings{mo-rg,
author = {Barrett, Leon and Narayanan, Srini},
title = {Learning all optimal policies with multiple criteria},
year = {2008},
booktitle = {Proceedings of the 25th International Conference on Machine Learning},
pages = {41–47},
numpages = {7},
series = {ICML '08}
}

@book{bdr2023,
    title={Distributional Reinforcement Learning},
    author={Marc G. Bellemare and Will Dabney and Mark Rowland},
    publisher={MIT Press},
    year={2023}
}

@article{bauerle_2011,
author={B{\"a}uerle, Nicole
and Ott, Jonathan},
title={Markov Decision Processes with Average-Value-at-Risk criteria},
journal={Mathematical Methods of Operations Research},
year={2011},
month={Dec},
volume={74},
number={3},
pages={361-379},
issn={1432-5217},
}

@misc{zhang_2020,
      title={Cautious Reinforcement Learning via Distributional Risk in the Dual Domain}, 
      author={Junyu Zhang and Amrit Singh Bedi and Mengdi Wang and Alec Koppel},
      year={2020},
      eprint={2002.12475},
      archivePrefix={arXiv},
      primaryClass={stat.ML},
}
\bibliographystyle{rlj}

\crefalias{section}{appendix}
\crefalias{subsection}{appendix}
\crefalias{subsubsection}{appendix}
\beginSupplementaryMaterials

\section{Extended related work discussion} \label{appendix:related_work}
Previous works studied GUMDPs with risk-neutral objectives \citep{zahavy_2021,geist2022concave}. \citet{zahavy_2021} reformulate GUMDPs as a two-player game involving a policy and a cost (negative reward) player, using Fenchel duality. \citet{geist2022concave} connect GUMDPs and mean-field games. Still in the risk-neutral setting, subsequent works identified a key implicit assumption underlying the GUMDPs framework: the performance of a given policy may depend on the number of trials/trajectories drawn to evaluate its performance \citep{mutti_2023,santos_2024}.  In fact, the previous works show that the standard formulation of GUMDPs implicitly assumes the performance of a given policy is evaluated under an infinite number of trials/trajectories, an assumption that may be violated under many interesting application domains. To address this gap, \citet{mutti_2023,santos_2025} introduce finite trials formulations for GUMDPs, allowing to find optimal risk-neutral policies with respect to a finite number of trials/trajectories. None of the works above study risk-aware settings, being focused on learning risk-neutral policies. As pointed out in the main text, some of the results in our paper build on top of results put forth by the aforementioned studies. As an example, our work relies on the occupancy MDP formulation, as described in Sec.~\ref{sec:occupancy_mdp}, which was introduced in \citep{santos_2025}.

In the context of risk-aware MDPs, previous works studied the optimization of risk-aware objectives such as the conditional value-at-risk \cite{NIPS2015_64223ccf} or the ERM \citep{hau_2023,marthe2025efficientrisksensitiveplanningentropic,mortensen2025entropicriskoptimizationdiscounted}. In the context of our work, we are focused on more general objective functions of the occupancies induced by the different policies, not only being focused on the case of linear objectives (MDPs). Nevertheless, some of the techniques we employ resemble those adopted in the context of risk-aware MDPs. As an example, it is common in the context of risk-aware MDPs to extend the state space to accommodate sufficient statistics since the optimal policy depends on the history only through a certain kind of ``sufficient statistic'' \citep{bauerle_2011}. This resembles the construction of our occupancy MDP; however, the states in the occupancy MDP do not correspond to sufficient statistics of the history but instead correspond directly to histories (Lemma~\ref{lemma:one_to_one_mapping_histories_states}). This is required since, under more general settings, optimal policies may depend arbitrarily on the history. Other works, such as \citet{hau_2023}, exploit discounting and consider finite-horizon objectives to compute approximately optimal policies for infinite-horizon objectives - this technique is common in the field, and we followed a similar approach in \ref{proposition:opt_gap_decomposition}.

Finally, in the context of MDPs, \citet{zhang_2020} propose a new definition of the risk of a policy (called caution) as a function of the occupancy induced by the policy. Then, the authors formulate a caution-sensitive policy
optimization problem by adding the caution risk as a penalty function to the dual objective of the linear programming formulation of MDPs. The formulation proposed by the authors can itself be seen as solving a risk-neutral GUMDP, where the objective function contains a term that captures the risk of a given policy. In our work, we are instead focused on solving risk-aware GUMDPs.

\section{Lipschitz constants} \label{appendix:A}

\begin{table}[h]
\centering
\caption{Lipschitz constants for common objective functions found in the GUMDPs literature. In $(\dagger)$ we assume $\vec{d}$ is lower bounded by $\epsilon$ satisfying $0 < \epsilon < e^{-2}$.}
\label{tab:objective_functions}
\begin{tabular}{c c c }
\toprule
\textbf{Task} & \textbf{Objective ($f(\vec{d})$)} & \textbf{Lipschitz constant ($L$)} \\ \cmidrule(ll){1-1} \cmidrule(ll){2-2} \cmidrule(ll){3-3}
MDPs/RL & $\vec{d}^\top \vec{c}, \quad \vec{c} \in \mathbb{R}^{|\S||\A|}$ & $\max_{s,a} |c(s,a)|$ \\ \cmidrule(ll){1-1} \cmidrule(ll){2-2} \cmidrule(ll){3-3}
Max. state entropy expl. & $\vec{d}^\top\log(\vec{d})$ & $|\log(\epsilon) + 1 |\; (\dagger)$ \\ \cmidrule(ll){1-1} \cmidrule(ll){2-2} \cmidrule(ll){3-3}
Imitation learning & $\| \vec{d} - \vec{d}_{\pi_b} \|_2^2, \quad \vec{d}_{\pi_b} \in \Delta(\S \times \A)$ & $4$
\\ \cmidrule(ll){1-1} \cmidrule(ll){2-2} \cmidrule(ll){3-3}
Multi-objective MDPs & $n_1 \vec{d}^\top \vec{c}_1 + \ldots + n_k\vec{d}^\top \vec{c}_k$ & $\max_{s,a} |\tilde{c}(s,a)|$ \\
(weighted) & $\vec{c}_1, \ldots, \vec{c}_k \in \mathbb{R}^{|\S||\A|}$ & $\tilde{\vec{c}} = n_1 \vec{c_1} + \ldots + n_k \vec{c_k}$ \\
 & $n_1, \ldots, n_k \in \mathbb{R}$ &
\\ \cmidrule(ll){1-1} \cmidrule(ll){2-2} \cmidrule(ll){3-3}
Multi-objective MDPs & $f(\vec{d}) = \max_{i \in \{1, \ldots, k\}} \vec{d}^\top \vec{c}_i$ & $ \max_{i \in \{1, \ldots, k\}} L_i$ \\
(max) & $\vec{c}_1, \ldots, \vec{c}_k \in \mathbb{R}^{|\S||\A|}$ & $L_i = \max_{s,a} |c_i(s,a)|$ \\
\\ \cmidrule(ll){1-1} \cmidrule(ll){2-2} \cmidrule(ll){3-3}
Multi-objective MDPs & $f(\vec{d}) = \min_{i \in \{1, \ldots, k\}} \vec{d}^\top \vec{c}_i$ & $\max_{i \in \{1, \ldots, k\}} L_i$ \\
(min) & $\vec{c}_1, \ldots, \vec{c}_k \in \mathbb{R}^{|\S||\A|}$ & $L_i = \max_{s,a} |c_i(s,a)|$ \\
\bottomrule
\end{tabular}
\end{table}

\paragraph{Standard MDP ($f(\vec{d}) = \vec{c}^\top \vec{d}$)}
It holds that
\begin{equation*}
    |f(\vec{d}_1) - f(\vec{d}_2)| = |\vec{c}^\top (\vec{d}_1 - \vec{d}_2)| \le \sum_{s,a} |c(s,a)| |d_1(s,a) - d_2(s,a)| \le \max_{s,a} |c(s,a)| \| \vec{d}_1 - \vec{d}_2\|_1.
\end{equation*}

\paragraph{Maximum state entropy exploration ($f(\vec{d}) = \vec{d}^\top \log(\vec{d})$)}
We assume $\vec{d}$ is lower bounded by $\epsilon$, i.e., $d(s,a) \ge\epsilon$ with $0 < \epsilon < e^{-2}$ for all $s \in \S, a \in \A$. We let $f(\vec{d}) = \sum_{s,a} g(d(s,a))$, for $g(x) = x \log(x)$. We note that, $g'(x) = \log(x) + 1$ and it holds for any $x \in [\epsilon,1]$ that $|g'(x)| \le |\log(\epsilon) + 1|$. Thus, for any $x_1, x_2 \in [\epsilon,1]$ we have that

\begingroup
\allowdisplaybreaks
\begin{align*}
    | g(x_1) - g(x_2)| &= \left| \int_{x_2}^{x_1} g'(x) dx \right| = \left| \int_{\min\{x_1,x_2\}}^{\max\{x_1,x_2\}} g'(x) dx \right|\\
    &\le \int_{\min\{x_1,x_2\}}^{\max\{x_1,x_2\}} \left|g'(x) \right| dx \le \int_{\min\{x_1,x_2\}}^{\max\{x_1,x_2\}} \left|\log(\epsilon) + 1 \right| dx \\
    &= \left|\log(\epsilon) + 1 \right| |x_1 - x_2|.
\end{align*}
\endgroup

Thus, for any $\vec{d}_1, \vec{d}_2 \in \Delta(\S\times \A)$ lower bounded by $0 < \epsilon < e^{-2}$, it holds that
\begingroup
\allowdisplaybreaks
\begin{align*}
    |f(\vec{d}_1) - f(\vec{d}_1)| &= \left|\sum_{s,a} \left( g(d_1(s,a)) -g(d_2(s,a)) \right) \right| \\
    &\overset{(a)}{\le} \sum_{s,a} \left| g(d_1(s,a)) -g(d_2(s,a)) \right| \\
    &\le \sum_{s,a} \left|\log(\epsilon) + 1 \right| |d_1(s,a) - d_2(s,a)| \\
    &= \left|\log(\epsilon) + 1 \right| \| \vec{d}_1 - \vec{d}_2 \|_1
\end{align*}
\endgroup
were (a) follows from the triangular inequality.

\paragraph{Imitation learning ($f(\vec{d}) = \| \vec{d} - \vec{d}_{\pi_b}\|_2^2$)}
It holds that $\nabla f(\vec{d}) = 2 (\vec{d}- \vec{d}_{\pi_b})$. Now,
\begin{equation*}
    \max_{\vec{d} \in \Delta(\S \times \A) } \| \nabla f(\vec{d}) \|_1 = 2 \max_{\vec{d} \in \Delta(\S \times \A) } \| \vec{d} - \vec{d}_{\pi_b} \|_1 \le 2 \max_{\vec{d}_1, \vec{d}_2 \in \Delta(\S \times \A) } \| \vec{d}_1 - \vec{d}_2 \|_1 = 4.
\end{equation*}
Since the function $f$ is continuous and differentiable over the simplex, which is compact, it holds that $L = 4$ is a valid Lipschitz constant as it corresponds to an upper bound on the maximum magnitude of the gradient of $f$ over $\Delta(\S \times \A)$.

\paragraph{Multi-objective MDP, weighted ($f(\vec{d}) = n_1 \vec{d}^\top \vec{c}_1 + \ldots + n_k\vec{d}^\top \vec{c}_k$)}
We note that $f$ can be equivalently rewritten as $f(d) = \vec{d}^\top \tilde{\vec{c}}$, where $\tilde{\vec{c}} = n_1 \vec{c_1} + \ldots + n_k \vec{c_k}$. Therefore
\begin{equation*}
    |f(\vec{d}_1) - f(\vec{d}_2)| \le \max_{s,a} |\tilde{c}(s,a)| \| \vec{d}_1 - \vec{d}_2\|_1.
\end{equation*}

\paragraph{Multi-objective MDP, max ($f(\vec{d}) = \max_{i \in \{1, \ldots, k\}} \vec{d}^\top \vec{c}_i$)}
The result follows by noting that letting $L = \max_{i \in \{1, \ldots, k\}} L_i$ yields a valid Lipschiz constant, where $L_i = \max_{s,a} |c_i(s,a)|$, since we are taking the maximum of Lipschitz functions with Lipschitz constants $L_1, \ldots, L_k$.

\paragraph{Multi-objective MDP, min ($f(\vec{d}) = \min_{i \in \{1, \ldots, k\}} \vec{d}^\top \vec{c}_i$)}
It holds that
\begin{equation*}
    |f(\vec{d}_1) - f(\vec{d}_2)| = \left|\min_{i \in \{1, \ldots, k\}} \vec{c}_i^\top \vec{d}_1 - \min_{i \in \{1, \ldots, k\}} \vec{c}_i^\top \vec{d}_2 \right| \le \max_{i \in \{1, \ldots, k\}} |\vec{c}_i^\top \vec{d}_1 - \vec{c}_i^\top \vec{d}_2| \le L \| \vec{d}_1 - \vec{d}_2 \|_1,
\end{equation*}
where  $L = \max_{i \in \{1, \ldots, k\}} L_i$ and $L_i = \max_{s,a} |c_i(s,a)|$.

%
%
%
%

\section{Supplementary materials for Sec.~\ref{sec:solving_GUMDPs_with_online_planning}}
\label{appendix:B}

\subsection{Proof of Proposition~\ref{proposition:opt_gap_decomposition}}

\begin{lemma}
    \label{lemma:upper_bound_finite_vs_infinite_omega}
    For any $\omega \in \Omega$, $\pi \in \Pi_\textnormal{NM}$ and $H \in \mathbb{N}$ it holds that
    $|\rvar{f}^\pi(\omega) - \rvar{f}^{\pi}_H(\omega)| \le 2 L_f \gamma^H.$
\end{lemma}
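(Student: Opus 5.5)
By Assumption~\ref{assumption:lipschitz_objective}, it suffices to show the pathwise bound $\|\rvec{d}^\pi(\omega) - \rvec{d}^{\pi,H}(\omega)\|_1 \le 2\gamma^H$ for every trajectory $\omega = (s_0,a_0,s_1,a_1,\ldots)$. Applying $|f(\vec{d}_1)-f(\vec{d}_2)| \le L_f\|\vec{d}_1-\vec{d}_2\|_1$ with $\vec{d}_1 = \rvec{d}^\pi(\omega)$ and $\vec{d}_2 = \rvec{d}^{\pi,H}(\omega)$ then gives the claim. Both vectors lie in $\Delta(\S\times\A)$: the entries of \eqref{eq:empirical_occupancy} sum to $(1-\gamma)\sum_t\gamma^t = 1$, and those of \eqref{eq:empirical_truncated_occupancy} sum to $\frac{1-\gamma}{1-\gamma^H}\sum_{t<H}\gamma^t = 1$. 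Hence the Lipschitz assumption applies. The argument is deterministic in $\omega$, so the policy $\pi$ plays no role.

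To bound the $\ell_1$ distance, I would introduce the unnormalized truncated vector $\vec{u}$ with entries $u_{s,a} = (1-\gamma)\sum_{t=0}^{H-1}\gamma^t\mathbf{1}(s_t=s,a_t=a)$ and use the triangle inequality:
\begin{equation*}
\|\rvec{d}^\pi(\omega)-\rvec{d}^{\pi,H}(\omega)\|_1 \le \|\rvec{d}^\pi(\omega)-\vec{u}\|_1 + \|\vec{u}-\rvec{d}^{\pi,H}(\omega)\|_1 .
\end{equation*}

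For the first term, $\rvec{d}^\pi(\omega)-\vec{u}$ has nonnegative entries $(1-\gamma)\sum_{t\ge H}\gamma^t\mathbf{1}(\cdot)$. Their total is $(1-\gamma)\sum_{t\ge H}\gamma^t = \gamma^H$.

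For the second term, $\rvec{d}^{\pi,H}(\omega) = \vec{u}/(1-\gamma^H)$, so $\rvec{d}^{\pi,H}(\omega)-\vec{u} = \vec{u}\,\frac{\gamma^H}{1-\gamma^H}$ with nonnegative entries. Its $\ell_1$ norm is $\|\vec{u}\|_1\frac{\gamma^H}{1-\gamma^H}$. Since $\|\vec{u}\|_1 = 1-\gamma^H$, this equals $\gamma^H$.

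Summing the two terms gives $2\gamma^H$, and multiplying by $L_f$ yields the bound. There is no real obstacle here. The only care needed is the bookkeeping of the normalization constant: the two terms must be computed exactly rather than crudely bounded, so that the constant comes out as $2$.
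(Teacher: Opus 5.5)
Your proposal is correct and follows essentially the same route as the paper. The paper also splits $\rvec{d}^\pi(\omega)-\rvec{d}^{\pi,H}(\omega)$ into the renormalization error on the first $H$ steps (your $\vec{u}-\rvec{d}^{\pi,H}(\omega)$) and the tail beyond $H$ (your $\rvec{d}^\pi(\omega)-\vec{u}$), applies the triangle inequality, shows each piece has $\ell_1$ mass $\gamma^H$, and then invokes the Lipschitz assumption.
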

\begin{proof}
    For any $\omega \in \Omega$, $\pi \in \Pi_\text{NM}$ and $H \in \mathbb{N}$ it holds that
    \begingroup
    \allowdisplaybreaks
    \begin{align*}
        |\rvar{f}^\pi(\omega) - \rvar{f}^{\pi}_H(\omega)| &= \left| f(\rvec{d}^{\pi}(\omega)) - f(\rvec{d}^{\pi,H}(\omega)) \right| \\
        &\overset{\text{(a)}}{\le} L_f  \left\| \rvec{d}^{\pi}(\omega)) - \rvec{d}^{\pi,H}(\omega)) \right\|_1 \\
        &\overset{\text{(b)}}{=} L_f \left\| (1-\gamma) \sum_{t=0}^\infty \gamma^t \rvec{d}^\pi_t(\omega) - \frac{(1-\gamma)}{1-\gamma^H} \sum_{t=0}^{H-1} \gamma^t \rvec{d}^{\pi}_t(\omega) \right\|_1 \\
        &= L_f \left\| \frac{(1-\gamma)}{1-\gamma^H} \sum_{t=0}^{H-1} \gamma^t \left( (1-\gamma^H) \rvec{d}^\pi_t(\omega) - \rvec{d}^\pi_t(\omega) \right) + (1-\gamma) \sum_{t=H}^\infty \gamma^t \rvec{d}^\pi_t(\omega) \right\|_1 \\
         &\overset{\text{(c)}}{\le} L_f \left( \frac{(1-\gamma)}{1-\gamma^H} \sum_{t=0}^{H-1} \gamma^t \left\|(1-\gamma^H) \rvec{d}^\pi_t(\omega) - \rvec{d}^\pi_t(\omega) \right\|_1 + (1-\gamma) \sum_{t=H}^\infty \gamma^t \| \rvec{d}^\pi_t(\omega) \|_1 \right) \\
        &= L_f \frac{(1-\gamma)}{1-\gamma^H} \gamma^H \sum_{t=0}^{H-1} \gamma^t \left\| \rvec{d}^\pi_t(\omega) \right\|_1 + L_f \gamma^H \\
        &= 2 L_f \gamma^H,
    \end{align*}
    \endgroup
    where: (a) is due to the $L_f$-Lipschitz assumption; in (b) we used $\rvec{d}^\pi(\omega) = (1-\gamma) \sum_{t=0}^\infty \gamma^t \rvec{d}^\pi_t(\omega)$ where $\rvar{d}^\pi_{t,(s,a)}(\omega) = \mathbf{1}(s_t=s, a_t = a)$ denotes the empirical occupancy induced by the trajectory $\omega$ at timestep $t$ and $\rvec{d}^{\pi,H}(\omega) = (1-\gamma) /(1-\gamma^H)\sum_{t=0}^{H-1} \gamma^t \rvec{d}^\pi_t(\omega)$. Step (c) follows from the triangular inequality.
\end{proof}

\begin{lemma} \label{lemma:upper_bound_truncated_non_truncated}
    For any policy $\pi \in \Pi_\textnormal{NM}$ and $H \in \mathbb{N}$ it holds that
    $\left| \rho(\rvar{f}^\pi_H) - \rho\left(\rvar{f}^\pi\right) \right| \le 2 L_f \gamma^H.$
\end{lemma}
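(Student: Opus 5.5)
The plan is to pass the pathwise bound of Lemma~\ref{lemma:upper_bound_finite_vs_infinite_omega} through the entropic risk measure, using two standard properties of $\text{ERM}_\beta$: monotonicity and translation (cash) invariance. The key observation is that $\rvar{f}^\pi$ and $\rvar{f}^\pi_H$ are defined on the \emph{same} probability space $(\Omega,\F,\mathbb{P}_\pi)$ and are evaluated on the same trajectory $\omega$. (For $\rvar{f}^\pi_H$ we view it as a function of the infinite trajectory that depends only on its first $H$ steps; its law under $\mathbb{P}_\pi$ coincides with that under the finite-horizon space, so $\rho(\rvar{f}^\pi_H)$ is unchanged.) Hence Lemma~\ref{lemma:upper_bound_finite_vs_infinite_omega} gives, for every $\omega\in\Omega$,
\begin{equation*}
\rvar{f}^\pi(\omega) - 2L_f\gamma^H \le \rvar{f}^\pi_H(\omega) \le \rvar{f}^\pi(\omega) + 2L_f\gamma^H .
\end{equation*}

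First, I establish the two properties of $\rho=\text{ERM}_\beta$ directly from the definition $\frac{1}{\beta}\ln\mathbb{E}[\exp(\beta\rvar{z})]$.
\begin{enumerate*}[label=(\roman*)]
\item \emph{Monotonicity.} If $\rvar{z}_1\le\rvar{z}_2$ pointwise, then $\exp(\beta\rvar{z}_1)\le\exp(\beta\rvar{z}_2)$ since $\beta>0$. Taking expectations, and using that $\ln$ is increasing and $1/\beta>0$, gives $\rho(\rvar{z}_1)\le\rho(\rvar{z}_2)$.
\item \emph{Translation invariance.} For a constant $c$, $\mathbb{E}[\exp(\beta(\rvar{z}+c))]=e^{\beta c}\,\mathbb{E}[\exp(\beta\rvar{z})]$, so $\rho(\rvar{z}+c)=\rho(\rvar{z})+c$.
\end{enumerate*}
All the expectations involved are finite because, by Assumption~\ref{assumption:bounded_objective}, the variables take values in $[-R,R]$ (shifted by a bounded constant).

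Next, I apply these properties to the sandwich above. Monotonicity followed by translation invariance yields
\begin{equation*}
\rho(\rvar{f}^\pi)-2L_f\gamma^H \le \rho(\rvar{f}^\pi_H)\le \rho(\rvar{f}^\pi)+2L_f\gamma^H,
\end{equation*}
which is exactly the claimed bound $|\rho(\rvar{f}^\pi_H)-\rho(\rvar{f}^\pi)|\le 2L_f\gamma^H$.

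The only step requiring any care is checking that both random variables live on a common probability space, since the paper later redefines $(\Omega,\F,\mathbb{P}_\pi)$ to be the finite-horizon space. The resolution is the one noted above: $\rvar{f}^\pi_H$ is measurable with respect to the first $H$ coordinates, so its distribution under the infinite-trajectory measure equals its distribution under the finite-horizon measure. Since $\rho$ depends only on the distribution, we can work throughout on the infinite-trajectory space. The rest of the argument is routine.
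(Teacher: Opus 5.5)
Your proposal is correct and follows the paper's proof: you take the pathwise sandwich from Lemma~\ref{lemma:upper_bound_finite_vs_infinite_omega}, then apply the monotonicity of $\text{ERM}_\beta$ and translation invariance. Your version also spells out the translation-invariance step, which the paper uses silently in its ``Equivalently'', and it addresses the common-probability-space point explicitly.
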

\begin{proof}
    From Lemma~\ref{lemma:upper_bound_finite_vs_infinite_omega} we can infer that, for any $\omega \in \Omega$, $\rvar{f}^\pi(\omega) - 2 L_f \gamma^H \le \rvar{f}^{\pi}_H(\omega) \le \rvar{f}^\pi(\omega) + 2 L_f \gamma^H$. Thus, from the monotonicity of $\rho$ we have that $\rho(\rvar{f}^\pi - 2 L_f \gamma^H) \le \rho(\rvar{f}^{\pi}_H) \le \rho(\rvar{f}^\pi + 2 L_f \gamma^H)$. Equivalently, $\rho(\rvar{f}^\pi) - 2 L_f \gamma^H \le \rho(\rvar{f}^{\pi}_H) \le \rho(\rvar{f}^\pi) + 2 L_f \gamma^H$, and the result follows.
\end{proof}

\begin{lemma}
    \label{lemma:regret_of_optimal_truncated_policy}
    If $\pi^*_H = \argmin_{\pi \in \Pi_\textnormal{NM}} \rho(\rvar{f}^\pi_H)$, then it holds that
    $\mathrm{OptGap}(\pi^*_H ) \le 4  L_f \gamma^H$.
\end{lemma}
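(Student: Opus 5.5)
The plan is a standard sandwich argument that uses Lemma~\ref{lemma:upper_bound_truncated_non_truncated} twice: once for the truncated optimizer $\pi^*_H$ and once for a (near-)optimizer of the infinite-horizon objective. Write $\rho^* = \inf_{\pi' \in \Pi_\text{NM}} \rho(\rvar{f}^{\pi'})$. By definition $\mathrm{OptGap}(\pi^*_H) = \rho(\rvar{f}^{\pi^*_H}) - \rho^*$. We bound the first term from above and the second from below.

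\textbf{Upper bound on $\rho(\rvar{f}^{\pi^*_H})$.} Applying Lemma~\ref{lemma:upper_bound_truncated_non_truncated} to $\pi^*_H$ gives $\rho(\rvar{f}^{\pi^*_H}) \le \rho(\rvar{f}^{\pi^*_H}_H) + 2L_f\gamma^H$.

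\textbf{Lower bound on $\rho^*$.} Take an arbitrary $\pi' \in \Pi_\text{NM}$. Optimality of $\pi^*_H$ for the truncated problem gives $\rho(\rvar{f}^{\pi^*_H}_H) \le \rho(\rvar{f}^{\pi'}_H)$. Lemma~\ref{lemma:upper_bound_truncated_non_truncated} applied to $\pi'$ then gives $\rho(\rvar{f}^{\pi'}_H) \le \rho(\rvar{f}^{\pi'}) + 2L_f\gamma^H$. Chaining the three inequalities,
\begin{equation*}
\rho(\rvar{f}^{\pi^*_H}) \le \rho(\rvar{f}^{\pi'}) + 4 L_f \gamma^H \quad \text{for every } \pi' \in \Pi_\text{NM}.
\end{equation*}
Taking the infimum over $\pi'$ yields $\rho(\rvar{f}^{\pi^*_H}) - \rho^* \le 4L_f\gamma^H$, which is the claim.

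\textbf{Main subtlety: existence of minimizers.} The statement writes $\argmin$ and $\min$, which presumes the minima are attained. The argument above avoids this for the infinite-horizon problem by working with an arbitrary $\pi'$ and then taking an infimum, so no infinite-horizon minimizer is needed.

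For the truncated problem, $\pi^*_H$ is given by hypothesis, so nothing needs proving. If one wants to justify its existence, the following argument works.
\begin{itemize}
\item Only the first $H$ decision rules matter for $\rvar{f}^\pi_H$, so the truncated problem is a minimization over a finite product of simplices.
\item The trajectory probabilities $\PP[\pi]{\omega}$ are polynomial in these decision rules, so the map $\pi \mapsto \rho(\rvar{f}^\pi_H) = \frac{1}{\beta}\ln \mathbb{E}[\exp(\beta \rvar{f}^\pi_H)]$ is continuous on this compact set.
\item A continuous function on a compact set attains its minimum.
\end{itemize}

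If the statement is instead read with $\argmin$ meaning any $\epsilon$-minimizer, the same chain gives the bound up to an additive $\epsilon$, and letting $\epsilon \to 0$ recovers the result.
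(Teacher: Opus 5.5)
Your proof is correct and follows the same route as the paper. Both apply Lemma~\ref{lemma:upper_bound_truncated_non_truncated} once to $\pi^*_H$ and once to the competing policies, together with optimality of $\pi^*_H$ for the truncated objective; the paper phrases this as sandwiching $\min_\pi \rho(\rvar{f}^\pi)$ between $\rho(\rvar{f}^{\pi^*_H}_H) - 2L_f\gamma^H$ and $\rho(\rvar{f}^{\pi^*_H})$, and your use of an infimum over arbitrary $\pi'$ avoids its tacit assumption that the infinite-horizon minimum is attained.
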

\begin{proof}
    As shown in Lemma~\ref{lemma:upper_bound_truncated_non_truncated}, $\left| \rho(\rvar{f}^\pi_H) - \rho\left(\rvar{f}^\pi\right) \right| \le 2 L_f \gamma^H$, for arbitrary $\pi \in \Pi_\text{NM}$. From such inequality, we can infer that $\rho\left(\rvar{f}^\pi_H\right) - 2 L_f\gamma^H \le \rho\left(\rvar{f}^\pi\right), \forall \pi \in \Pi_\text{NM}$, i.e., function $\rho\left(\rvar{f}^\pi_H\right) - 2 L_f\gamma^H$ lower bounds function $\rho\left(\rvar{f}^\pi\right)$. Let $\pi^*_H = \argmin_{\pi \in \Pi_\text{NM}} \rho\left(\rvar{f}^\pi_H\right)$.
    It holds that
    \begin{equation}
        \rho\left(\rvar{f}^{\pi^*_H}_H\right) - 2 L_f\gamma^H = \min_\pi \rho\left(\rvar{f}^\pi_H\right) - 2 L_f\gamma^H \overset{\text{(a)}}{\le} \min_\pi \rho\left(\rvar{f}^\pi\right) \overset{\text{(b)}}{\le} \rho\left(\rvar{f}^{\pi^*_H}\right), \label{eq:lemma:regret_of_optimal_truncated_policy:chain_of_ineqs}
    \end{equation}
    where (a) follows from the fact that $\rho\left(\rvar{f}^\pi_H\right) - 2 L_f\gamma^H$ lower bounds $\rho\left(\rvar{f}^\pi\right)$; and (b) from the fact that $\min_\pi \rho\left(\rvar{f}^\pi\right) \le \rho\left(\rvar{f}^{\pi'}\right), \forall \pi'$ (from the definition of a minimum). Finally, we note that 
    \begin{align*}
        \rho\left(\rvar{f}^{\pi^*_H}\right) - \left(\rho\left(\rvar{f}^{\pi^*_H}_H\right) - 2 L_f\gamma^H \right) &= \rho\left(\rvar{f}^{\pi^*_H}\right) - \rho\left(\rvar{f}^{\pi^*_H}_H\right) + 2 L_f \gamma^H \\
        &\le \left| \rho\left(\rvar{f}^{\pi^*_H}\right) - \rho\left(\rvar{f}^{\pi^*_H}_H\right) \right| + 2 L_f \gamma^H\\
        &\le 4 L_f\gamma^H.
    \end{align*}
    The above and \eqref{eq:lemma:regret_of_optimal_truncated_policy:chain_of_ineqs} imply that
    $$ \mathrm{OptGap}(\pi^*_H) = \rho\left(\rvar{f}^{\pi^*_H}\right) - \min_{\pi'} \rho\left(\rvar{f}^{\pi'}\right) \le 4 L_f \gamma^H.$$
\end{proof}

\paragraph{Proof of Proposition \ref{proposition:opt_gap_decomposition}}
\begin{proof}
    Let $\pi^*_H = \argmin_{\pi \in \Pi_\text{NM}} \rho(\rvar{f}^\pi_H)$, i.e., $\pi^*_H$ is optimal with respect to the truncated objective. It holds that,
    \begingroup
    \allowdisplaybreaks
    \begin{align*}
        \mathrm{OptGap}(\pi) &= \rho\left(\rvar{f}^\pi\right) - \min_{\pi' \in \Pi_{\text{NM}}} \rho\left(\rvar{f}^{\pi'}\right) \\
        &= \left| \rho\left(\rvar{f}^\pi\right) - \min_{\pi' \in \Pi_{\text{NM}}} \rho\left(\rvar{f}^{\pi'}\right) \right|\\
        &\overset{\text{(a)}}{\le} \left| \rho\left(\rvar{f}^\pi\right) - \rho(\rvar{f}^{\pi_H^*}) \right| + \left| \rho(\rvar{f}^{\pi_H^*}) - \min_{\pi' \in \Pi_{\text{NM}}} \rho\left(\rvar{f}^{\pi'}\right) \right|\\
        &\overset{\text{(b)}}{\le} \left| \rho\left(\rvar{f}^\pi\right) - \rho(\rvar{f}^{\pi_H^*}) \right| + 4 L_f \gamma^H \\
        &\overset{\text{(c)}}{\le} \left| \rho\left(\rvar{f}^\pi\right) - \rho\left(\rvar{f}^\pi_H\right) \right| + \left| \rho\left(\rvar{f}^\pi_H\right) - \rho(\rvar{f}^{\pi_H^*}) \right| + 4 L_f \gamma^H \\
        &\overset{\text{(d)}}{\le} 2 L_f \gamma^H + \left| \rho\left(\rvar{f}^\pi_H\right) - \rho(\rvar{f}^{\pi_H^*}) \right| + 4 L_f \gamma^H \\
        &\overset{\text{(e)}}{\le} 2 L_f \gamma^H + \left| \rho\left(\rvar{f}^\pi_H\right) - \rho(\rvar{f}^{\pi_H^*}_H) \right| + \left| \rho(\rvar{f}^{\pi_H^*}_H) -\rho(\rvar{f}^{\pi_H^*}) \right| + 4 L_f \gamma^H \\
        &\overset{\text{(f)}}{\le} 2 L_f \gamma^H + \left| \rho\left(\rvar{f}^\pi_H\right) - \rho(\rvar{f}^{\pi_H^*}_H) \right| + 2 L_f \gamma^H + 4 L_f \gamma^H \\
        &= \rho(\rvar{f}^\pi_H)  - \min_{\pi' \in \Pi_\text{NM}} \left\{ \rho(\rvar{f}^{\pi'}_H) \right\} + 8 L_f \gamma^H,
    \end{align*}
    \endgroup
    where (a) follows from adding and subtracting $\rho(\rvar{f}^{\pi_H^*})$ and applying the triangular inequality; (b) follows from Lemma \ref{lemma:regret_of_optimal_truncated_policy}; (c) follows from adding and subtracting $\rho\left(\rvar{f}^\pi_H\right)$ and applying the triangular inequality; (d) follows from Lemma \ref{lemma:upper_bound_truncated_non_truncated}; (e) follows from adding and subtracting $\rho(\rvar{f}^{\pi_H^*}_H)$ and applying the triangular inequality; and (f) follows from Lemma \ref{lemma:upper_bound_truncated_non_truncated}.
\end{proof}

\subsection{Proof of Lemma~\ref{lemma:equivalence_in_distribution}}
\begin{proof}
    Random variable $\rvar{f}_H^\pi$, as defined in \eqref{eq:truncated_f_random_var_definition}, is associated with the probability space $(\Omega, \F, \mathbb{P}_\pi)$ and random variable $\rvar{J}^{\pi_\text{O}}_\text{O}$, as introduced in Sec.~\ref{sec:occupancy_mdp}, is associated with the probability space $(\Omega_\text{O}, \F_\text{O}, \mathbb{P}_{\pi_\text{O}}^\text{O})$. Random variables $\rvar{f}_H^\pi$ and $\rvar{J}^{\pi_\text{O}}_\text{O}$ are equal in distribution if, for any $y \in \Y$,
    $$\PP[\pi]{\rvar{f}_H^\pi \le y} = \mathbb{P}_{\pi_\text{O}}^\text{O}[\rvar{J}^{\pi_\text{O}}_\text{O} \le y].$$
    
    We start by noting that, for any trajectory $\omega_\text{O} = (\{s_0,\vec{o}_0\}, a_0, \{s_1,\vec{o}_1\}, a_1, \ldots, \{s_{H}, \vec{o}_H\}) \in \Omega_\text{O}$,
    \begingroup
    \allowdisplaybreaks
    \begin{align*}
        \mathbb{P}_{\pi_\text{O}}^\text{O}[\omega_\text{O}] &=  p_{0,\text{O}}(\{s_0,\vec{o}_0\}) \cdot \pi_\text{O}(a_0|\{s_0,\vec{o}_0\}) \cdot P_\text{O}^{a_0}(\{\rvar{s}_0,\rvec{o}_0\}, \{\rvar{s}_1,\rvec{o}_1\}) \cdot \ldots \\
        &\qquad \qquad \cdot \pi_\text{O}(a_{H-1}|\{s_{H-1},\vec{o}_{H-1}\}) \cdot P_\text{O}^{a_{H-1}}(\{s_{H-1},\vec{o}_{H-1}\}, \{s_{H},\vec{o}_{H}\}).\\
        &\overset{(a)}{=} p_{0}(s_0) \cdot \mathbf{1}(\vec{o}_0 = [0,\ldots,0]) \cdot \pi_\text{O}(a_0|\{s_0,\vec{o}_0\}) \cdot P^{a_0}(s_0,s_1) \cdot \mathbf{1}(\vec{o}_1 = \sigma(s_0,\vec{o}_0,a_0)) \cdot \ldots\\
        &\qquad \qquad \cdot \pi_\text{O}(a_{H-1}|\{s_{H-1},\vec{o}_{H-1}\}) \cdot P^{a_{H-1}}(s_{H-1}, s_{H}) \cdot \mathbf{1}(\vec{o}_H = \sigma(s_{H-1}, \vec{o}_{H-1}, a_{H-1}))\\
        &\overset{(b)}{=} p_{0}(s_0) \cdot \mathbf{1}(\vec{o}_0 = [0,\ldots,0]) \cdot \pi(a_0|h_0) \cdot P^{a_0}(s_0,s_1) \cdot \mathbf{1}(\vec{o}_1 = \sigma(s_0,\vec{o}_0,a_0)) \cdot \ldots\\
        &\qquad \qquad \cdot \pi(a_{H-1}|h_{H-1}) \cdot P^{a_{H-1}}(s_{H-1}, s_{H}) \cdot \mathbf{1}(\vec{o}_H = \sigma(s_{H-1}, \vec{o}_{H-1}, a_{H-1})) \\
        &\overset{(c)}{=} \PP[\pi]{\omega} \cdot P^{a_{H-1}}(s_{H-1}, s_{H}) \cdot \mathbf{1}(\vec{o}_0 = [0,\ldots,0]) \cdot \mathbf{1}(\vec{o}_1 = \sigma(s_0,\vec{o}_0,a_0)) \cdot \ldots \\
        &\qquad \qquad \cdot \mathbf{1}(\vec{o}_H = \sigma(s_{H-1}, \vec{o}_{H-1}, a_{H-1})),
    \end{align*}
    \endgroup
    where in (a) we note that component $\vec{o}$ of the state is initialized as a zero vector and then deterministically evolves according to $\sigma$; any sequence of $\vec{o}$-vectors that does not evolve according to $\sigma$ has zero probability under probability measure $\mathbb{P}_{\pi_\text{O}}^\text{O}$. In (b) we used the fact that any stationary policy $\pi_\text{O} \in \Pi_\text{S}$ for $\M_\text{O}$ can be mapped to a particular non-Markovian policy $\pi \in \Pi_\text{NM}$ in $\M_f$. In (c) we recall that, for $\omega = (s_0, a_0, s_1, a_1, \ldots, s_{H-1}, a_{H-1})$,
    $\PP[\pi]{\omega} = p_0(s_0) \cdot \pi(a_0|h_0) \cdot P^{a_0}(s_0, s_1) \cdot \pi(a_1|h_1) \cdot P^{a_1}(s_1, s_2) \ldots P^{a_{H-2}}(s_{H-2}, s_{H-1}) \cdot \pi(a_{H-1}|h_{H-1})$.

    Now, for any $y \in \Y$ and stationary policy $\pi_\text{O} \in \Pi_\text{S}$, it holds that
    \begingroup
    \allowdisplaybreaks
    \begin{align*}
        \mathbb{P}_{\pi_\text{O}}^\text{O}[\rvar{J}^{\pi_\text{O}}_\text{O} \le y] &= \mathbb{P}_{\pi_\text{O}}^\text{O}[\{\omega_\text{O} : \rvar{J}^{\pi_\text{O}}_\text{O}(\omega_\text{O}) \le y\}] \\
        &= \sum_{\omega_\text{O} \in \Omega_\text{O}} \mathbb{P}_{\pi_\text{O}}^\text{O}[\omega_\text{O}] \mathbf{1}\left(\rvar{J}^{\pi_\text{O}}_\text{O}(\omega_\text{O}) \le y \right) \\
        &= \sum_{\omega_\text{O} \in \Omega_\text{O}} \mathbb{P}_{\pi_\text{O}}^\text{O}[\omega_\text{O}] \mathbf{1}\left(f\left(\frac{1-\gamma}{1-\gamma^H} \vec{o}_H\right) \le y \right) \\
        &= \sum_{\omega_\text{O} \in \Omega_\text{O}} \PP[\pi]{\omega} \cdot P^{a_{H-1}}(s_{H-1}, s_{H}) \cdot \mathbf{1}(\vec{o}_0 = [0,\ldots,0]) \cdot \mathbf{1}(\vec{o}_1 = \sigma(s_0,\vec{o}_0,a_0)) \cdot \ldots \\
        &\qquad \qquad \cdot \mathbf{1}(\vec{o}_H = \sigma(s_{H-1}, \vec{o}_{H-1}, a_{H-1})) \mathbf{1}\left(f\left(\frac{1-\gamma}{1-\gamma^H} \vec{o}_H\right) \le y \right) \\
        &\overset{(a)}{=} \sum_{\omega_\text{O} \in \Omega_\text{O}} \PP[\pi]{\omega} \cdot P^{a_{H-1}}(s_{H-1}, s_{H}) \cdot \mathbf{1}(\vec{o}_0 = [0,\ldots,0]) \cdot \mathbf{1}(\vec{o}_1 = \sigma(s_0,\vec{o}_0,a_0)) \cdot \ldots \\
        &\qquad \qquad \cdot \mathbf{1}(\vec{o}_H = \sigma(s_{H-1}, \vec{o}_{H-1}, a_{H-1})) \mathbf{1}\left(f\left(\rvec{d}^{\pi,H}(\omega)\right) \le y \right) \\
        &\overset{(b)}{=} \sum_{\omega \in \Omega} \PP[\pi]{\omega} \mathbf{1}\left(f\left(\rvec{d}^{\pi,H}(\omega)\right) \le y \right) \sum_{\vec{o}_0, \vec{o}_1, \ldots, \vec{o}_H \in \O} \sum_{s_H \in \S} P^{a_{H-1}}(s_{H-1}, s_{H}) \cdot \\
        &\qquad \qquad \mathbf{1}(\vec{o}_0 = [0,\ldots,0]) \cdot \mathbf{1}(\vec{o}_1 = \sigma(s_0,\vec{o}_0,a_0)) \cdot \ldots \cdot \mathbf{1}(\vec{o}_H = \sigma(s_{H-1}, \vec{o}_{H-1}, a_{H-1})) \\
        &\overset{(c)}{=} \sum_{\omega \in \Omega} \PP[\pi]{\omega} \mathbf{1}\left(f\left(\rvec{d}^{\pi,H}(\omega)\right) \le y \right) \\
        &= \sum_{\omega \in \Omega} \PP[\pi]{\omega} \mathbf{1}\left(\rvar{f}_H^\pi\left(\omega\right) \le y \right) \\
        &= \sum_{\omega \in \Omega} \PP[\pi]{\{\omega : \rvar{f}_H^\pi\left(\omega\right) \le y\}} \\
        &= \PP[\pi]{\rvar{f}_H^\pi \le y},
    \end{align*}
    \endgroup
    where in (a) we noted that, for any $\omega_\text{O} \in \Omega_\text{O}$, $\mathbf{1}\left(f\left(\frac{1-\gamma}{1-\gamma^H} \vec{o}_H\right) \le y \right) = \mathbf{1}\left(f\left(\rvec{d}^{\pi,H}(\omega)\right) \le y \right)$. In (b), we split the sum over $\omega_\text{O} \in \Omega_\text{O}$ as a sum over $\omega \in \Omega$, a sum over each possible vector $\vec{o} \in \O$ across all timesteps, and a sum over the final state $s_H \in \S$ (not included in $\omega$). We also rearranged the sums by noting that some terms do not depend on some of the sums. In (c) we note that the inner sums over the $\vec{o}$-vectors and $s_H$ equal one.
\end{proof}

\subsection{Proof of Theorem~\ref{theo:gumdp_equivalence_occupancy_MDP}}
\begin{proof}
    From Lemma~\ref{lemma:equivalence_in_distribution}, for any horizon $H \in \mathbb{N}$ and policy $\pi \in \Pi_\text{NM}$, it holds that $\rvar{f}_H^\pi \overset{D}{=} \rvar{J}_\text{O}^{\pi_\text{O}}$, where $\pi_\text{O}$ is the stationary policy for $\M_\text{O}$ associated with the non-Markovian policy $\pi$ for $\M_f$. Thus, for any risk-measure $\rho : \L \rightarrow \mathbb{R}$, it holds that $\rho(\rvar{f}_H^\pi) = \rho(\rvar{J}_\text{O}^{\pi_\text{O}})$. Also, from Lemma~\ref{lemma:one_to_one_mapping_histories_states} there exists a one-to-one mapping between non-Markovian policies for $\M_f$ and stationary policies for $\M_\text{O}$ since every state in $\M_\text{O}$ is uniquely associated with a particular history in $\M_f$ (and vice versa). Given these two results, we have that, for any $\pi \in \Pi_\text{NM}$,
    \begin{equation*}
        \mathrm{OptGap}_H(\pi) = \rho\left(\rvar{f}^\pi_H\right) -  \min_{\pi' \in \Pi_{\text{NM}}} \rho\left(\rvar{f}^{\pi'}_H\right) = \rho\left(\rvar{J}_\text{O}^{\pi_\text{O}}\right) - \min_{\pi'_\text{O} \in \Pi_\text{S}} \rho\left(\rvar{J}_\text{O}^{\pi'_\text{O}}\right),
    \end{equation*}
    and the conclusion follows.
\end{proof}

\clearpage
\section{\textsc{ERM-MCTS} pseudocode}
\label{appendix:ERM-MCTS-pseudocode}
We display in~\Cref{algo:erm_mcts} the pseudocode for the \textsc{ERM-MCTS} algorithm.

\begin{algorithm}
\caption{$\text{ERM}_\beta$-MCTS for finite-horizon MDPs with terminal costs.}
    \textbf{Inputs:} $s_0$ (root node), $H$ (horizon), $n$ (number of MCTS iterations), and $\{\theta_t\}_{t \in \{1, \ldots, H-1\}}$ (exploration constants). \\
    \textbf{Initialization:} $\X_{(s_t,a)} = [\;], \; \forall t \in \{0, \ldots, H-1\}, s_t \in \S, a \in \A$ (Initialize list to store sampled terminal costs starting from $(s_t,a)$). $N(s_t) = 0, N(s_t,a) = 0, \; \forall t \in \{0, \ldots, H-1\}, s \in \S, a \in \A$ (Initialize visitation counters).
    \begin{algorithmic}[1]
        \For{\( j \in n\)}
            \For{\( t \in \{0, \ldots, H-1\}\)}
            \If{$N(\rvar{s}_t,a) = 0$ for some $a \in \A$}
                $\rvar{a}_t = a$.
            \Else
                \State Select an action according to:
                \State $\quad \rvar{a}_t \in \argmin_{a \in \A} \Bigg\{  \hat{\rho}_{(\rvar{s}_t,a),N(\rvar{s}_t,a)} - \theta_t \sqrt{\frac{\sqrt{N(\rvar{s}_t)}}{N(\rvar{s}_t,a)}}\Bigg\}$,
                \State $\quad$ where $\hat{\rho}_{(\rvar{s}_t,a),N(\rvar{s}_t,a)} = \frac{1}{\beta} \ln\left(\frac{1}{N(\rvar{s}_t,a)} \sum_{i=1}^{N(\rvar{s}_t,a)} \exp \left(\beta \rvar{x}_i^{(\rvar{s}_t,a)} \right)\right).$
            \EndIf
            \State Sample and transition to next state $\rvar{s}_{t+1} \sim P^{\rvar{a}_t}_\text{O}(\cdot|\rvar{s}_t)$.
            \EndFor
            \State At depth $H$ observe terminal cost $c_\text{O}(\rvar{s}_H)$.
            \For{\( t \in \{H-1, \ldots, 0\}\)}
                \State Update visitation counters:
                \State $\quad \quad N(\rvar{s}_t) = N(\rvar{s}_t) + 1$.
                \State $\quad \quad N(\rvar{s}_t,\rvar{a}_t) = N(\rvar{s}_t,\rvar{a}_t) + 1$.
                \State Store sampled terminal cost:
                \State $\quad \quad \text{Store } \rvar{x}_{N(\rvar{s}_t,\rvar{a}_t)}^{(\rvar{s}_t,\rvar{a}_t)} = c_\text{O}(\rvar{s}_H)$ in $\X_{(\rvar{s}_t,\rvar{a}_t)}$.
            \EndFor
        \EndFor
    \label{algo:erm_mcts}
    \end{algorithmic}
\end{algorithm}

\section{Experimental Results}
\label{appendix:experimental-results}

Our code is available at \url{https://github.com/gh0stwin/risk-aware-gumdp}.

\subsection{Baselines}
In the context of standard MDPs, the \textsc{ERM-BI} baseline exploits the dynamic programming decompositions of the ERM put forth by \citep{hau_2023}. In particular, \citet{hau_2023} show that the optimal value function $V^* = \{V^*_{t}\}_{t \in \{0, \ldots, H\}}$ and the optimal policy $\pi^* = \{\pi^*_t\}_{t \in \{0, \ldots, H-1\}}$ satisfy, for all $s \in \S$,
\begin{align*}
    V_t^*(s) &= \min_{a \in \A}\left\{\text{ERM}_{\beta \gamma^t}\left( c_t(s,a) + \gamma V_{t+1}^*(\rvar{s}')\right) \right\}, \; \forall t \in \{0, \ldots, H-1\}, \; V_{H}^*(s) = c_H(s) \label{eq:erm_bellman_optimality equations}, \\
    \pi^*_t(s) &= \argmin_{a \in \A}\left\{\text{ERM}_{\beta \gamma^t}\left( c_t(s,a) + \gamma V_{t+1}^*(\rvar{s}')\right) \right\}, \; \forall t \in \{0, \ldots, H-1\}.
\end{align*}

In the context of standard MDPs, we perform backward induction from timestep $t=H$ backwards until timestep $t=0$ to extract the optimal risk-aware policy. We also note that the occupancy MDP is undiscounted (i.e., $\gamma = 1$) and, hence, $\beta \gamma^t = \beta$ for all timesteps.

\subsection{Hyperparameters}
For implementation purposes, we let the \textsc{ERM-MCTS} parameters $\theta_t = \theta$ for all $ t \in \{0, \ldots, H-1\}$, see~\Cref{sec:on-planning}. We display in~\Cref{tab:appendix:illustrative-gumdps-hyperparams,tab:appendix:grid-gumdps-hyperparams} the hyperparameters used in our experiments, where $N$ denotes the number of independent runs of \textsc{ERM-MCTS}, $n$ is the number of MCTS iterations per timestep for \textsc{ERM-MCTS}, and $\theta$ is the \textsc{ERM-MCTS} exploration constant. Under MDPs, we also run $N$ independent runs of the oracle baseline \textsc{ERM-BI}.

\begin{table}[h]
\centering
\resizebox{0.8\columnwidth}{!}{%
\begin{tabular}{@{}cccccc@{}}
\toprule
\textbf{Environment} & \textbf{\begin{tabular}[c]{@{}c@{}}$\gamma$\\ (Discount)\end{tabular}} & \textbf{\begin{tabular}[c]{@{}c@{}}$H$\\ (Horizon)\end{tabular}} & \textbf{\begin{tabular}[c]{@{}c@{}}$N$\\ (Num. runs)\end{tabular}} & \textbf{\begin{tabular}[c]{@{}c@{}}$n$\\ (MCTS iter.)\end{tabular}} & \textbf{\begin{tabular}[c]{@{}c@{}}$\theta$\\ (Expl. const.)\end{tabular}} \\ \midrule
\multicolumn{1}{c}{\textbf{MDP}} & \multicolumn{1}{c}{0.9} & \multicolumn{1}{c}{20} & \multicolumn{1}{c}{100} & \multicolumn{1}{c}{500} & \multicolumn{1}{c}{1} \\ \midrule
\multicolumn{1}{c}{\textbf{MSEE}} & \multicolumn{1}{c}{0.9} & \multicolumn{1}{c}{20} & \multicolumn{1}{c}{100} & \multicolumn{1}{c}{500} & \multicolumn{1}{c}{1} \\ \midrule
\multicolumn{1}{c}{\textbf{IL}} & \multicolumn{1}{c}{0.9} & \multicolumn{1}{c}{20} & \multicolumn{1}{c}{100} & \multicolumn{1}{c}{2 000} & \multicolumn{1}{c}{1} \\ \midrule
\multicolumn{1}{c}{\textbf{MO}} & \multicolumn{1}{c}{0.99} & \multicolumn{1}{c}{20} & \multicolumn{1}{c}{100} & \multicolumn{1}{c}{500} & \multicolumn{1}{c}{1} \\ \bottomrule
\end{tabular}%
}
\caption{Hyperparameters used for the illustrative environment experiments.}
\label{tab:appendix:illustrative-gumdps-hyperparams}
\end{table}

\begin{table}[h]
\centering
\resizebox{0.8\columnwidth}{!}{%
\begin{tabular}{@{}cccccc@{}}
\toprule
\textbf{Environment} & \textbf{\begin{tabular}[c]{@{}c@{}}$\gamma$\\ (Discount)\end{tabular}} & \textbf{\begin{tabular}[c]{@{}c@{}}$H$\\ (Horizon)\end{tabular}} & \textbf{\begin{tabular}[c]{@{}c@{}}$N$\\ (Num. runs)\end{tabular}} & \textbf{\begin{tabular}[c]{@{}c@{}}$n$\\ (MCTS iter.)\end{tabular}} & \textbf{\begin{tabular}[c]{@{}c@{}}$\theta$\\ (Expl. const.)\end{tabular}} \\ \midrule
\multicolumn{1}{c}{\textbf{MSEE}} & \multicolumn{1}{c}{0.99} & \multicolumn{1}{c}{200} & \multicolumn{1}{c}{128} & \multicolumn{1}{c}{1024} & \multicolumn{1}{c}{\(\sqrt{2}\)} \\ \midrule
\multicolumn{1}{c}{\textbf{IL}} & \multicolumn{1}{c}{0.99} & \multicolumn{1}{c}{99} & \multicolumn{1}{c}{128} & \multicolumn{1}{c}{1024} & \multicolumn{1}{c}{\(\sqrt{2}\)} \\ \midrule
\multicolumn{1}{c}{\textbf{MO}} & \multicolumn{1}{c}{0.99} & \multicolumn{1}{c}{40} & \multicolumn{1}{c}{128} & \multicolumn{1}{c}{1024} & \multicolumn{1}{c}{\(\sqrt{2}\)} \\ \midrule
\end{tabular}%
}
\caption{Hyperparameters used for the grid environment experiments.}
\label{tab:appendix:grid-gumdps-hyperparams}
\end{table}

%

\subsection{Illustrative environments}
We display in Fig.~\ref{fig:illustrative_envs} an overview of the illustrative environments. In the remainder of this section, we detail the GUMDPs used for each task, regarding their state space, action space, dynamics, and cost function used.

\subsubsection{Standard MDP (Fig.~\ref{fig:illustrative_envs}~(a))} \label{app:illus-env-mdp}
We employ a four-state MDP where the agent needs to tradeoff at the initial state ($s_0$) between:
\begin{enumerate*}
    \item a risky action ($a_0$) that can lead to a low-cost state ($s_2$), but with some probability the agent ends in a high-cost state ($s_3$); and
    \item a safe action ($a_1$) that deterministically leads to a medium-cost state ($s_1$).
\end{enumerate*}
The agent resets to the initial state with $10\%$ probability while not in the initial state. The cost function is given by $\vec{c} = [c(s_0),c(s_1),c(s_2),c(s_3)]$ where $c(s_0) = 0, \; c(s_1) = \frac{1}{4}, \; c(s_2) = \frac{1}{20}, \; c(s_3) = 1$. We let $f(\vec{d}) = \vec{d}^\top \vec{c}$.

\subsubsection{Maximum State Entropy Exploration (Fig.~\ref{fig:illustrative_envs}~(b))}
The task is similar to the motivating example from~\Cref{sec:intro}, where the agent aims to explore an environment as uniformly as possible but there is chance that the agent transitions to an absorbing state ($s_4$). The agent starts at state $s_1$. We let $f(\vec{d}) = \vec{d}^\top\log(\vec{d})$.

\subsubsection{Imitation Learning (Fig.~\ref{fig:illustrative_envs}~(a))}
The dynamics of this environment are the same as the standard MDP (\Cref{app:illus-env-mdp}). The agent aims to imitate the empirical occupancy induced by the trajectory of an agent that selected twice the risky action ($a_0$) under the standard MDP and then selected the safe action ($a_1$) for all the remainder timesteps. The trajectory to imitate had a rather \emph{lucky} outcome, as it never ended up in the absorbing state ($s_3$). Hence, the agent needs to trade off between imitating the behavior policy in states ($s_0$, $s_1$, $s_2$) and risking being absorbed into $s_3$, or only imitating the behavior policy in the \emph{less risky} states ($s_0$, $s_1$). To be precise, the empirical occupancy to imitate is $d_{\pi_b}(s_0, a_0)= 0.20605099$, $d_{\pi_b}(s_0, a_1)= 0.30175732$, $d_{\pi_b}(s_1, a_0)= 0.17054104$, $d_{\pi_b}(s_1, a_1)= 0.15004508$, $d_{\pi_b}(s_2, a_0)= 0.10245629$, $d_{\pi_b}(s_2, a_1)= 0.0829896$, $d_{\pi_b}(s_3, a_0)= 0.0$, $d_{\pi_b}(s_3, a_1)= 0.0$. We let $f(\vec{d}) = \| \vec{d} - \vec{d}_{\pi_b}\|_2^2$.

\subsubsection{Multi-Objective MDP (Fig.~\ref{fig:illustrative_envs}~(c))}
The environment is inspired by the FishWood environment \citep{roijers_2020}, where the agent needs to trade off between two cost functions. Each cost function penalizes different behaviors. The cost function $\vec{c_1}$ is defined as $c_1(s_0) = 0$, $c_1(s_1) = -1$, $c_1(s_2) = 0.5$, $c_1(s_3) = 0$, $c_1(s_4) = 0$. The cost function $\vec{c_2}$ is defined as $c_2(s_0) = 0$, $c_2(s_1) = 0$, $c_2(s_2) = 0$, $c_2(s_3) = -0.2$, $c_2(s_4) = 0.2$. The agent starts at state $s_0$. Any action at states $s_1$, $s_2$, $s_3$, and $s_4$ takes the agent deterministically back to the initial state. We consider three utility functions: 
\begin{enumerate*}[label=(\roman*)]
    \item \textit{weighted}, where $f(\vec{d}) = \vec{d}^\top \vec{c}_1 + \vec{d}^\top \vec{c}_2$;
    \item \textit{max}, where $f(\vec{d}) = \max(\vec{d}^\top \vec{c}_1, \vec{d}^\top \vec{c}_2)$; and
    \item \textit{min}, where $f(\vec{d}) = \min( \vec{d}^\top \vec{c}_1, 2 \cdot \vec{d}^\top \vec{c}_2)$.
\end{enumerate*}

\subsection{Grid Environments}
\subsubsection{Maximum State Entropy Exploration} \label{app:msee_grid}
\Cref{tab:grid-msee-params} contains the hyperparameters used in the environment employed in the MSEE task. The environment dynamics are described in~\Cref{sec:grid_msee}.

\begin{table}[t]
\centering
\caption{Grid environment parameters used in the MSEE and IL task. Grid positions are defined using 0-based indexing and \((0,0)\) position it at the top-left corner.}
\label{tab:grid-msee-params}
\begin{tabular}{c c}
\toprule
Parameter & Value \\
\midrule
Grid size & \(10\times10\) \\
Agent starting position & \((9,0)\) \\
Difficult terrain positions & \((6, 0),(6,1),(6,2),(6,3),(6,4),(8,3),(9,3),\)\\
& \((1,5),(1,6),(1,7),(1,8),(2,5),(2,6),(2,7),\)\\
& \((2,8),(7,5),(3,5),(3,9),(9,1),(5,2)\) \\
\(p_\text{trap}\) & \(0.1\) \\
\(p_\text{untrap}\) & \(0.01\) \\

\bottomrule
\end{tabular}
\end{table}

\subsection{Imitation Learning}
We reuse the environment and corresponding hyperparameters defined in the MSEE task (\Cref{app:msee_grid}). Additionally, we compute the occupancy of the behavioral policy \(\vec{d}_{\pi_b}\) after the agent performing the sequence of actions depicted in~\Cref{fig:il_beh_pol}. As already mentioned in~\Cref{sec:grid_il}, we ensure that the behavior policy never gets trapped inside difficult terrain squares. Under this approach we make sure that \textsc{ERM-MCTS} can model different behaviors (with distinct \emph{risk awareness}) by considering which difficult terrain squares to visit, if any.

\begin{figure}[t]
    \begin{center}
        \includegraphics[width=0.5\linewidth]{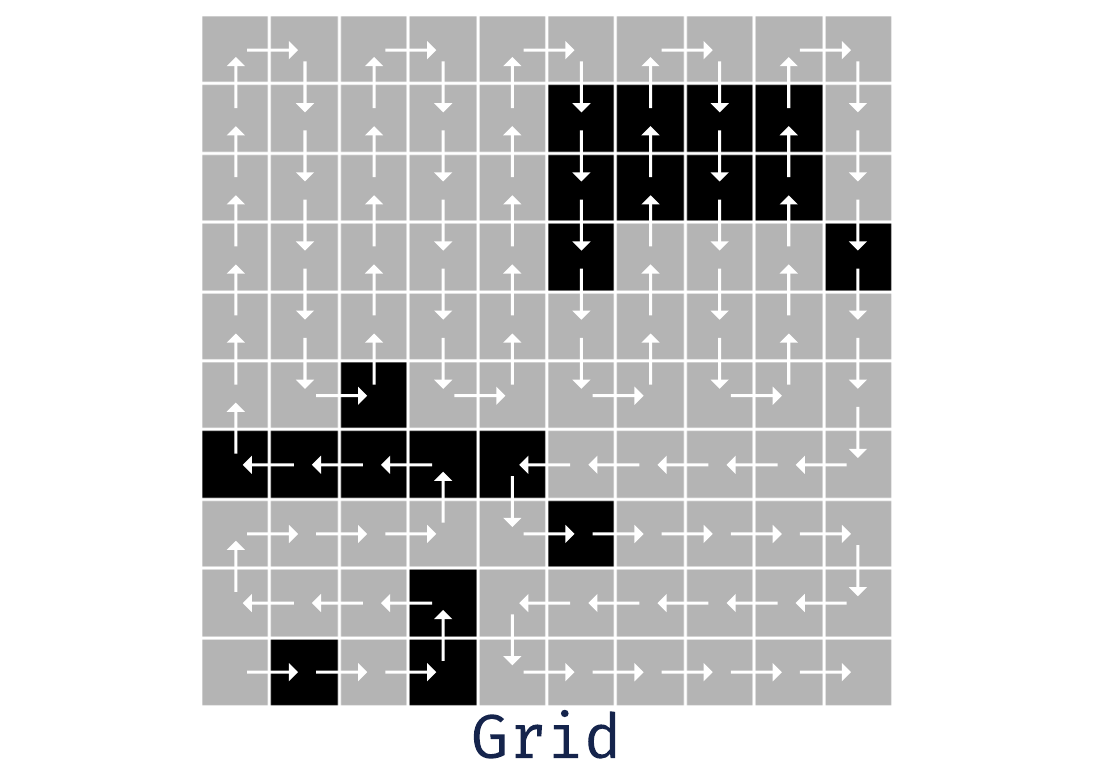}
    \end{center}
    \caption{IL (grid environment): Sequence of actions taken by the behavior policy \(\pi_b\). Agent starts in position \((9,0)\), assuming 0-based indexing and \((0,0)\) position it at the top-left corner.}
    \label{fig:il_beh_pol}
\end{figure}

\subsubsection{Multi-Objective} \label{app:mo-grid}
As stated in~\Cref{sec:grid-mo}, we use the resource-gathering environment~\citep{mo-rg} as our multi-objective task. Here, the agent has three objectives to maximize. The first two objectives address picking and delivering resources \(R_1\) and \(R_2\) to the home location, respectively. The last objective attends to the possibility of the agent getting defeated when stepping inside squares that contain enemies. Aiming to model the necessary information that is required for the reward functions, we define the environment state as a 5-tuple, \(s=\left((x,y),h_{R_1},h_{R_2},h_\text{start},h_\text{enemy}\right)\). Where \((x,y)\in\mathbb{N}^2\) holds the agent's position, \(h_{R_1},h_{R_2}\in\{0,1\}\) assign if the agent is carrying the corresponding resource, \(h_\text{start}\in\{0,1\}\) indicates whether the agent left the initial position, and \(h_\text{enemy}\in\{0,1\}\) indicates if a \emph{clash} with an enemy occurred. Regarding environment dynamics, the agent can move in all directions, plus we define a slippery condition where, with \(p_\text{slip}\) probability, the agent moves instead perpendicularly to the selected direction. Additionally, resources are collected when the agent enters the resource's square, setting \(h_r=1\), where \(r\in\{R_1,R_2\}\). Furthermore, when the agent steps inside a square that contains an enemy, with \(p_\text{defeat}\) probability, \(h_\text{enemy}\) becomes \(1\). In the case when \(h_\text{enemy}=1\), the agent loses immediately and transitions to an absorbing state until the end of the episode. Regarding reward modeling, we define three different reward functions \(\vec{r}_1,\ldots,\vec{r}_3\in\mathbb{R}^{|S||A|}\), where all entries are \(0\) except in specific states:
\begin{enumerate}[label=(\roman*)]
    \item For \(\vec{r}_1\), the states \(s\in\{((x_\text{home},y_\text{home}),1,0,1,0),((x_\text{home},y_\text{home}),1,1,1,0)\}\) receive \(1\) to compensate for delivering \(R_1\) to the agent's home position \((x_\text{home},y_\text{home})\).
    \item Analogously, \(\vec{r}_2\) accounts for the cases where \(R_2\) gets transported to  \((x_\text{home},y_\text{home})\), thus assigning the states \(s\in\{((x_\text{home},y_\text{home}),0,1,1,0),((x_\text{home},y_\text{home}),1,1,1,0)\}\) with reward \(1\).
    \item To account for when the agent loses as the consequence of an enemy clash, \(\vec{r}_3\) sets the reward to \(-1\) for any state \(s\) having \(h_\text{enemy}=1\).
\end{enumerate}
Subsequently, we combine all rewards by applying the following non-linear function, \(f(\vec{d})=\left(\vec{r}_1^\top\vec{d}\right)^{\frac{1}{2}}+\vec{r}_2^\top\vec{d}+\vec{r}_3^\top\vec{d}\). With \(f(\vec{d})\)\footnote{Transforming this objective to handle costs, we first set \(\vec{c}_i=-\vec{r}_i,\forall i\in\{1,\ldots,3\}\) and then use \(f(\vec{d})=\text{sign}(\vec{c}_1^\top\vec{d}){|\vec{c}_1^\top\vec{d}|}^{\frac{1}{2}}+\vec{c}_2^\top\vec{d}+\vec{c}_3^\top\vec{d}\).}, we assign higher priority when retrieving \(R_1\), e.g., simulating it being more valuable than \(R_2\).

The environment hyper-parameters are defined in~\Cref{tab:grid-mo-params}.

\begin{table}[t]
\centering
\caption{Grid environment parameters for the resource-gathering environment (MO task). Grid positions are defined using 0-based indexing and \((0,0)\) position it at the top-left corner.}
\label{tab:grid-mo-params}
\begin{tabular}{c c}
\toprule
Parameter & Value \\
\midrule
Grid size & \(5\times5\) \\
Agent starting position & \((4,2)\) \\
Home position \((x_\text{home},y_\text{home})\) & \((4,2)\) \\
\(R_1\) position & \((0, 2)\) \\
\(R_2\) position & \((1, 4)\) \\
Enemies positions & \((0, 3),(1,2)\) \\
\(p_\text{slip}\) & \(0.05\) \\
\(p_\text{defeat}\) & \(0.025\) \\

\bottomrule
\end{tabular}
\end{table}

\subsection{Compute}
The illustrative environments were simulated on a laptop CPU (Intel 11th Gen i5-1135G7) with \SI{16}{\giga\byte} of RAM. Additionally, the experiments for the grid environments were deployed on a server with a dual CPU (AMD EPYC 9224 24-Core) and \SI{770}{\giga\byte} of RAM. In the latter, due to the large number of CPUs, we effectively parallelize every experiment, allocating one CPU per independent run. \cref{tab:runtime-ie,tab:runtime-ge} display the runtime obtained when running ERM-MCTS on the illustrative and grid environments, respectively.

\begin{table}[t]
\centering
\caption{ERM-MCTS runtime on illustrative environments.}
\label{tab:runtime-ie}
\begin{tabular}{c c}
\toprule
\textbf{Environment} & Runtime \\
\midrule
\textbf{MDP} & \SI{140.50 \pm 6.22}{\second} \\
\textbf{MSEE} & \SI{165.75 \pm 25.73}{\second} \\
\textbf{IL} & \SI{897.75 \pm 66.13}{\second} \\
\textbf{MO (weighted)} & \SI{136.50 \pm 1.80}{\second} \\
\textbf{MO (max)} & \SI{134.75 \pm 7.76}{\second} \\
\textbf{MO (min)} & \SI{142.25 \pm 4.76}{\second} \\
\bottomrule
\end{tabular}
\end{table}

\begin{table}[t]
\centering
\caption{ERM-MCTS runtime on grid environments.}
\label{tab:runtime-ge}
\begin{tabular}{c c}
\toprule
\textbf{Environment} & Runtime \\
\midrule
\textbf{MSEE} & \SI{2302.19 \pm 30.89}{\second} \\
\textbf{IL} & \SI{593.00 \pm 10.35}{\second} \\
\textbf{MO} & \SI{318.18 \pm 17.31}{\second}\\
\bottomrule
\end{tabular}
\end{table}

\end{document}